\theoremstyle{plain}
\theoremstyle{definition}
\theoremstyle{remark}
\DeclareMathOperator*{\argmin}{arg\,min}
\icmltitlerunning{Mitigating Label Errors via Rockafellian Relaxation}
\begin{document}

\twocolumn[
\icmltitle{Mitigating the Impact of Labeling Errors on Training via
Rockafellian Relaxation}

% It is OKAY to include author information, even for blind
% submissions: the style file will automatically remove it for you
% unless you've provided the [accepted] option to the icml2025
% package.

% List of affiliations: The first argument should be a (short)
% identifier you will use later to specify author affiliations
% Academic affiliations should list Department, University, City, Region, Country
% Industry affiliations should list Company, City, Region, Country

% You can specify symbols, otherwise they are numbered in order.
% Ideally, you should not use this facility. Affiliations will be numbered
% in order of appearance and this is the preferred way.
\icmlsetsymbol{equal}{*}

\begin{icmlauthorlist}
\icmlauthor{Louis L. Chen}{nps1}
\icmlauthor{Bobbie Chern}{comp}
\icmlauthor{Eric C. Eckstrand}{nps2}
\icmlauthor{Amogh Mahapatra}{}
\icmlauthor{Johannes O. Royset}{usc}
\end{icmlauthorlist}

\icmlaffiliation{nps1}{Department of Operations Research, Naval Postgraduate School, Monterey, CA}
\icmlaffiliation{nps2}{Data Science and Analytics Group, Naval Postgraduate School, Monterey, CA}
\icmlaffiliation{usc}{Department of Industrial and Systems Engineering, University of Southern California}
\icmlaffiliation{comp}{Meta Platforms, Inc., Location, Country}

\icmlcorrespondingauthor{Louis L. Chen}{louis.chen@nps.edu}
\icmlcorrespondingauthor{Eric Eckstrand}{eric.eckstrand@nps.edu}

% You may provide any keywords that you
% find helpful for describing your paper; these are used to populate
% the "keywords" metadata in the PDF but will not be shown in the document
\icmlkeywords{Machine Learning, ICML}

\vskip 0.3in
]

% this must go after the closing bracket ] following \twocolumn[ ...

% This command actually creates the footnote in the first column
% listing the affiliations and the copyright notice.
% The command takes one argument, which is text to display at the start of the footnote.
% The \icmlEqualContribution command is standard text for equal contribution.
% Remove it (just {}) if you do not need this facility.

%\printAffiliationsAndNotice{}  % leave blank if no need to mention equal contribution
\printAffiliationsAndNotice{\icmlEqualContribution} % otherwise use the standard text.

\begin{abstract}
% Labeling errors in datasets are common, if not systematic, in practice. They naturally arise in a variety of contexts—human labeling, noisy labeling, and weak labeling (i.e., image classification), for example. This presents a persistent and pervasive stress on machine learning practice. In particular, neural network (NN) architectures can withstand minor amounts of dataset imperfection with traditional countermeasures such as regularization, data augmentation, and batch normalization. However, major dataset imperfections often prove insurmountable. We propose and study the implementation of Rockafellian Relaxation (RR), a new loss reweighting, architecture-independent methodology, for neural network training. Experiments indicate RR can enhance standard neural network methods to achieve robust performance across classification tasks in computer vision and natural language processing (sentiment analysis). We find that RR can mitigate the effects of dataset contamination due to both (heavy) labeling error and/or adversarial perturbation, demonstrating effectiveness across a variety of data domains and machine learning tasks.
Labeling errors in datasets are common, arising in a variety of contexts, such as human labeling, noisy labeling, and weak labeling (i.e., image classification). Although neural networks (NNs) can tolerate modest amounts of these errors, their performance degrades substantially once error levels exceed a certain threshold. We propose a new loss reweighting, architecture-independent methodology, Rockafellian Relaxation Method (RRM) for neural network training. Experiments indicate RRM can enhance neural network methods to achieve robust performance across classification tasks in computer vision and natural language processing (sentiment analysis). We find that RRM can mitigate the effects of dataset contamination stemming from both (heavy) labeling error and/or adversarial perturbation, demonstrating effectiveness across a variety of data domains and machine learning tasks.
%\textcolor{red}{This document provides a basic paper template and submission guidelines.
% Abstracts must be a single paragraph, ideally between 4--6 sentences long.
% Gross violations will trigger corrections at the camera-ready phase.}
\end{abstract}

\section{Introduction}
\label{submission}
% sources of label contamination
Labeling errors are systematic in practice, stemming from various sources. For example, the reliability of human-generated labels can be negatively impacted by incomplete information, or the subjectivity of the labeling task -- as is commonly seen in medical contexts, in which experts can often disagree on matters such as the location of electrocardiogram signal boundaries \cite{contaminationSurvey}, prostate tumor region delineation, and tumor grading \citep{prostateCancer}. As well, labeling systems, such as Mechanical Turk\footnote[2]{http://mturk.com} often find expert labelers being replaced with unreliable non-experts \citep{mturkNLP}. 
% Sometimes the source of label contamination stems from mistaken or malicious label assignments, as observed in the task of spam classification \citep{spamFiltering}.
For all these reasons, it would be advisable for any practitioner to operate under the assumption that their dataset is contaminated with labeling errors, and possibly to a large degree.

% taxonomy of label contamination
% In the taxonomy of label contamination the scope of our analysis is mostly confined to classification problems under the Noise Completely at Random (NCAR) setting of \citep{contaminationSurvey}, which is also known as \textit{uniform label noise}. Under this setting, $n$ samples are selected randomly without replacement from all $N$ training examples. Given $Y$ is the set of all class labels and $y$ is the true label for an example selected for contamination, each of the $n$ examples are assigned an erroneous label randomly selected from $Y \setminus {y}$.

% Furthermore, we adopt \textit{closed-set} contamination \citep{openSetNoisyLabels}. Under this regime, a contaminated example must have a true label that is a member of $Y$. This differs from an \textit{open-set} label contamination scheme, where the true label of a contaminated example maybe be a class $z \notin Y$. For example, in the case of a binary classification problem, where $Y = \{cat, dog\}$, the true label for a contaminated example may be $mouse$, but is assigned an erroneous label of $cat$ or $dog$.

In this paper, we propose a loss-reweighting methodology for the task of training a classifier on data having higher levels of labeling errors. More precisely, we provide a meta-algorithm that may ``wrap" any loss-minimization methodology to add robustness. The method relates to optimistic and robust  distributional optimization formulations aimed at addressing adversarial training (AT), which underscores our numerical experiments on NNs that suggest this method of training can provide test performance robust to high levels of labeling error, and to some extent, feature perturbation as well. Overall, we tackle the prevalent challenge of label contamination in training data, which is a critical obstacle for deploying robust machine learning models. The approach utilizes Rockafellian Relaxations \citep{royset2023} in addressing contaminated labels without the need for clean validation sets or sophisticated hyper-parameters - common constraints of current methodologies. This distinct capability represents a key contribution, making our approach more practical for handling large industrial datasets.

We proceed to discuss related works in Section \ref{relatedwork}, and our specific contributions to the literature. In Section \ref{methodology} we discuss our methodology in detail and provide some theoretical justifications that motivate the effectiveness of our methodology. Sections \ref{sec:data} and \ref{sec:architectures} discuss the datasets and NN model architectures upon which our experimental results are based. We then conclude with numerical experiments and results in section \ref{sec:results}.   

\section{Related Work}
\label{relatedwork}
% impact of label contamination
% In addition to contamination of labels, input features may be contaminated as well. Adversarial training is well studied and involves perturbing training inputs in a manner that results in a misclassification. The robustness of NNs under various adversarial settings has been explored \citep{adv1,adv2,adv3,adv4}. We extend this exploration by evaluating the effectiveness of our approach to settings where \textit{both} label contamination and adversarial input perturbations are present simultaneously.
Contaminated datasets are of concern, as they potentially pose severe threats to classification performance of numerous machine-learning approaches \citep{noiseImpact}, including, most notably, NNs \citep{noiseImpactNN1, noiseImpactNN3}. Naturally, there have been numerous efforts to mitigate this effect \citep{noisesurvey, contaminationSurvey}. These efforts can be categorized into robust architectures, robust regularization, robust loss function, \textit{loss adjustment}, and sample selection \citep{noisesurvey}. Robust architecture methods focus on developing custom NN layers and dedicated NN architectures. This differs from our approach, which is architecture agnostic and could potentially "wrap around" these methods. 
While robust regularization methods like data augmentation \citep{dataAug}, weight decay \citep{weightDecay}, dropout \citep{dropout}, and batch normalization \citep{batchNormalization} can help to bolster performance, they generally do so under lower levels of dataset contamination. Our approach, on the other hand, is capable of handling high levels of contamination, and can seamlessly incorporate methods such as these.
In label contamination settings, it has been shown that loss functions, such as robust mean absolute error (MAE) \citep{robustMAE}, early learning regularization (ELR) \citep{liu2020early}, and generalized cross entropy (GCE) \citep{robustGCE} are more robust than categorical cross entropy (CCE). Again, our method is not dependent on a particular loss function, and it is possible that arbitrary loss functions, including robust MAE and ELR, can be swapped into our methodology with ease. 
Our approach resembles the loss adjustment methods most closely, where the overall loss is adjusted based on a (re)weighting scheme applied to training examples.

In \textit{loss adjustment} methods, individual training example losses are typically adjusted multiple times throughout the training process prior to NN updates. These methods can be further grouped into loss correction, loss reweighting, label refurbishment, and meta-learning \citep{noisesurvey}. Our approach most closely resembles the \textit{loss reweighting} methods. Under this scheme each training example is assigned a unique weight, where smaller weights are assigned to examples that have likely been contaminated. This reduces the influence of contaminated examples. A training example can be completely removed if its corresponding weight becomes zero. 
% This technique has parallels to importance sampling and boosting algorithms, where weights are applied to the samples for different purposes. One drawback of these techniques is that additional hyper-parameters are introduced to estimate a weighting function. 
Indeed, a number of loss reweighting methods are similar to our approach. For example, \citet{reweightICML} learn sample weights through the use of a noise-free validation set. \citet{activebias} assign sample weights based on prediction variances, and \citet{dualgraph} examine the structural relationship among labels to assign sample weights. However, we view the need by these methods for a clean dataset, or at least one with sufficient class balance, as a shortcoming, and our method, in contrast, makes no assumption on the availability of such a dataset. 

\citet{datacleansingneurips} propose a two-phased approach to noise cleaning. The first phase trains a standard neural network to determine the top-$m$ most influential training instances that influence the decision boundary; these are subsequently removed from the training set to create a cleaner dataset. In the second phase, the neural network is retrained using the cleansed training set. Their method demonstrates superior validation accuracy for various values of $m$ on MNIST and CIFAR-10. Although impressive, their method does not address the fact that most industrial datasets have a reasonably large amount of label contamination \citep{noisesurvey} which, upon complete cleansing, could also remove informative examples that lie close to the decision boundary. Additionally,  the value of $m$ is an additional hyper-parameter that could require significant tuning on different datasets and sources.

Perhaps the most similar approach to this work is that of \citet{reweightICML}, in which label noise and class imbalance are treated simultaneously via the learning of exemplar weights. 
The biggest drawback of their method is that it requires a clean validation set, which in practice is almost impossible to obtain; if it were possible, it would not be very prohibitive to clean the entire dataset. Noise is typically an artifact of the generative distribution, which cannot be cherry-picked as easily in practice. Our approach does not require a clean dataset to be operational.  Furthermore, even though we do not explicitly treat class imbalance, we are not affected by class imbalance, as demonstrated in our experiments with an open-source Toxic Comments dataset (Appendix Section \ref{sec:toxicresults}).

\section{Methodology}
\label{methodology}

\subsection{Mislabeling}\label{sec:ncar}
Let $\mathcal{X}$ denote a \textit{feature} space, with $\mathcal{Y}$ a corresponding \textit{label} space. Then $\mathcal{Z}:= \mathcal{X} \times \mathcal{Y}$ will be a collection of feature-label pairs, with an unknown probability distribution $D$. Throughout the forthcoming discussions, $\{(x_i, y_i)\}_{i=1}^N$ will denote a sample of $N$ feature-label pairs, for which some pairs will be mislabeled. More precisely, we begin with a collection $(x_i, \tilde{y}_i)$ drawn i.i.d. from $D$, but there is some unknown set $C \subsetneq \{1, \ldots, N\}$ denoting (contaminated) indices for which $y_i = \tilde{y}_i$ if and only if $i \notin C.$ For indices $i \in C,$ $y_i$ is some incorrect label, say selected uniformly at random, following the Noise Completely at Random (NCAR) model \citep{contaminationSurvey} also known as \textit{uniform label noise}; however, the particular choice of noise/contamination matters not. While the experiments of Section \ref{sec:results} implement NCAR, we also consider non-uniform contamination in Appendix Section \ref{sec:nonuniform}.

\subsection{Rockafellian Relaxation Method (RRM)}\label{sec:rockafellianrelaxation}
% summary of section 6.1 of \citep{royset2023rockafellian}
We adopt the empirical risk minimization (ERM) \citep{vapnikERM} problem formulation:
\begin{equation}
\min_{\theta} \frac{1}{N}\sum_{i = 1}^N  J(\theta; x_i,y_i) + r(\theta)
% f_{0}(x) + \sum_{i=1}^{s}{p_{i}f_{i}(x)},
\label{eq:empirical_risk_minimizaion}
\end{equation}
as a baseline against which our method is measured. Given an NN architecture with (learned) parameter setting $\theta$ that takes as input any feature $x$ and  outputs a prediction $\hat{y}$, $J(\theta; x,y)$ is the loss with which we evaluate the prediction $\hat{y}$ with respect to $y$. Finally, $r(\theta)$ denotes a regularization term. 

In ERM it is common practice to assign each training observation $i$ a probability $p_i = 1/N$. However, when given a contaminated dataset, we may desire to remove those samples that are affected; in other words, if $C \subsetneq \{1, ... , N\}$ is the set of contaminated training observations, then we would desire to set the  probabilities in the following alternative way: 
\vspace{-2.5mm}
\begin{equation} \label{eqn:: DesiredContaminatedProbs}
p = (p_i)_{i \in [N]} \text{ with } p_{i} = 
\begin{cases}
    0,& \text{if } i \in C\\
    \frac{1}{N - |C|},& \text{if } i \in [N] \setminus C,
\end{cases}
\end{equation}
\vspace{-1mm}
where $|C|$ is the cardinality of the unknown set $C$. In this work, we provide a procedure - the \textit{Rockafellian Relaxation Method} (RRM) - with the intention of aligning the $p_i$ values closer to the desired (but unknown) $p$ of \eqref{eqn:: DesiredContaminatedProbs} in self-guided, automated fashion. It does so by adopting the Rockafellian Relaxation approach of \citep{royset2023}. More precisely, with $\Delta(N) \subseteq \mathbb{R}^N$ denoting the standard probability simplex, and $p^N$ denoting the equally weighted (empirical) distribution $\{p^N_i\}_{i \in [N]} = \{1/N\}$, we consider the problem
% \begin{align}
% \min_{\theta} \min_{u \in U} ~ & \sum_{i = 1}^N  (\frac{1}{N} + u_i) \cdot J(\theta; x_i,y_i) + \gamma \lVert u \rVert_{1} \label{eqn: RRMObjective}\\
% % & s.t. ~~ p^\nu + u \in \Delta:= \{\pi \in \mathbb{R}_+^N:   \sum_{i} \pi_i = 1\}
% & \text{s.t.}  \sum_{i=1}^N u_i = 0 \label{eqn: sumUEqualsZero}\\
% &~~~~~ \frac{1}{N} + u_i \geq 0 \;\; i = 1, \ldots, N. \label{eqn: RRMNonnegativity}
% \end{align}
\begin{align} 
    &\min_\theta \min_{p \in \Delta(N) }\mathbb{E}_{(x,y) \sim p } \left[J(\theta; x,y)\right] + \gamma \cdot d_{TV}(p^N, p)\nonumber \\
    &= \min_\theta \Bigl[ \underbrace{\min_{u \in U} \sum_{i = 1}^N  (\frac{1}{N} + u_i) \cdot J(\theta; x_i,y_i) + \frac{\gamma}{2} \lVert u \rVert_{1}}_{v(\theta)}\Bigr] , \label{eqn:: RRM_Problem}
\end{align}
where $U := \{u \in \mathbb{R}^N: \sum_{i=1}^N u_i = 0, \frac{1}{N} + u_i \geq 0 \;\; \forall i = 1, \ldots, N\}$, $\gamma > 0$, and $d_{TV}(p^N, p):= \frac{1}{2}\sum_i |p^N_i - p_i|$.
In words, in \eqref{eqn:: RRM_Problem} an alternative distribution $p$ may now replace the empirical, at the cost of $\gamma$ per unit of \textit{total variation}. Indeed, for any $u \in U$, we obtain a probability distribution $p^u \in \mathbb{R}^N$, defined via $p^u_i = 1/N + u_i,$ so that $\|u\|_1$ is the total variation distance between the empirical distribution and $p^u.$

We proceed to comment on this problem that is nonconvex in general, before providing an algorithm.

\subsection{Analysis and Interpretation of Rockafellian Relaxation} \label{sec: interpretRockafellian}
Although problem (\ref{eqn:: RRM_Problem}) is nonconvex in general, the computation of $v(\theta)$ for any fixed $\theta$ amounts to a linear program. The following result characterizes the complete set of solutions to this linear program, and in doing so, provides an interpretation of the role that $\gamma$ plays in the loss-reweighting action of RRM.

\begin{restatable}{theorem} {SolvingForU}\label{theorem:: SovlingForU}
Let $\gamma > 0$ and $c = (c_1, \dots, c_N) \in \mathbb{R}^N$, with $c_{min}:= \min_{i} c_i$, and $c_{max}:= \max_i c_i$. Write $I_{min} := \{i: c_i = c_{min}\}$, $I_{mid}:= \{i: c_i \in (c_{min}, c_{min} + \gamma)\}$, $I_{big} := \{i: c_i = c_{min} + \gamma\}$, and for any $S_1 \subseteq I_{min},$ $S_2 \subseteq I_{big}$, define the polytope\\
     $U^*_{S_1, S_2}:= 
     \begin{Bmatrix} 
     & u^* \in U\\
      & u^*_i \geq 0 \;\; \forall i \in I_{min}\\
      u^* : & u_i^* = 0 \;\; \forall i \in S_1 \cup I_{mid}\\
      & u^*_i \leq 0 \;\; \forall i \in I_{big}\\
       % & u_i^* = 0 \;\; \forall i \in I_{mid}\\
      &u_i^* = -\frac{1}{N} \;\; \forall i 
     \in S_2 \\
       &u_i^* = -\frac{1}{N} \;\; \forall i: c_i > c_{min} + \gamma\\
 \end{Bmatrix}.
 $\\
Then  $\mathrm{conv}\left(\bigcup_{S_1, S_2} U^*_{S_1, S_2}\right)$ is equivalently
\begin{equation} \label{eq:: inner - U} 
   \underset{u \in U}{\argmin} \sum_{i = 1}^N  (\frac{1}{N} + u_i) \cdot c_i + \frac{\gamma}{2} \lVert u \rVert_{1}.
\end{equation} 
\end{restatable}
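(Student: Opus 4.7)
The plan is to solve the inner linear program in \eqref{eq:: inner - U} by Lagrangian duality and read off its full optimal set from complementary slackness, then identify that set with the convex hull appearing in the statement.

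First I would recast the problem as an LP by splitting $u_i = u_i^+ - u_i^-$ with $u_i^+, u_i^- \geq 0$; the strict positivity of the $\gamma/2$ penalty forces $u_i^+ u_i^- = 0$ at any optimum, so $|u_i| = u_i^+ + u_i^-$ is represented faithfully. I would attach a multiplier $\lambda \in \mathbb{R}$ to the balance constraint $\sum_i (u_i^+ - u_i^-) = 0$ and multipliers $\mu_i \geq 0$ to the bounds $u_i^- - u_i^+ \leq 1/N$ that come from $u \in U$. Dual feasibility then reduces to $|\lambda + \mu_i - c_i| \leq \gamma/2$, which together with $\mu_i \geq 0$ and the dual objective $-\tfrac{1}{N}\sum_i \mu_i$ pins the dual optimum at $\lambda^* = c_{min} + \gamma/2$ and $\mu_i^* = \max(0, c_i - c_{min} - \gamma)$.

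Next I would apply complementary slackness at this dual optimum to characterize every primal optimum $u^*$. Positivity of $\mu_i^*$ on $\{i : c_i > c_{min} + \gamma\}$ activates the lower bound and forces $u_i^* = -1/N$. On $I_{mid}$ both reduced costs are strictly positive, so both the $u_i^+$ and $u_i^-$ components of $u^*$ vanish and $u_i^* = 0$. On $I_{min}$ the $u_i^+$ reduced cost vanishes while the $u_i^-$ one is strictly positive, leaving $u_i^- = 0$ and $u_i^* \geq 0$ free. On $I_{big}$ the $u_i^-$ reduced cost vanishes while the $u_i^+$ one is strictly positive, leaving $u_i^+ = 0$ and $u_i^* \in [-1/N, 0]$ free. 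Combined with $u^* \in U$, these conditions are exactly the defining constraints of $U^*_{\emptyset, \emptyset}$, and strong LP duality then confirms that every such $u^*$ is optimal.

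Finally, for any $S_1 \subseteq I_{min}$ and $S_2 \subseteq I_{big}$ the polytope $U^*_{S_1, S_2}$ is the face of $U^*_{\emptyset, \emptyset}$ obtained by appending the equalities $u_i^* = 0$ on $S_1$ and $u_i^* = -1/N$ on $S_2$; hence $\bigcup_{S_1, S_2} U^*_{S_1, S_2} = U^*_{\emptyset, \emptyset}$ (with the reverse inclusion witnessed by $(S_1, S_2) = (\emptyset, \emptyset)$), and since $U^*_{\emptyset, \emptyset}$ is convex the outer $\mathrm{conv}$ is redundant, giving the claim. The main obstacle is the boundary bookkeeping on $I_{min}$ and $I_{big}$, where complementary slackness restricts only the sign of $u_i^*$ rather than its value; one must verify that the freedom it leaves is precisely what the balance $\sum_i u_i^* = 0$ requires in order to absorb the forced removals from $\{i : c_i > c_{min} + \gamma\}$ (and any voluntary removals from $I_{big}$) by redistributing onto $I_{min}$, so that the KKT characterization is neither over- nor under-constrained.
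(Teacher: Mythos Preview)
Your proposal is correct and follows essentially the same Lagrangian/KKT route as the paper: both pin the multiplier on the balance constraint at $\lambda^* = c_{\min}+\gamma/2$ and then read off the optimal set index-by-index according to where $c_i$ falls relative to $c_{\min}$ and $c_{\min}+\gamma$. The only difference is cosmetic---the paper works directly with the subdifferentials of $|\cdot|$ and of the indicator $\iota_{[0,\infty)}(1/N+u_i)$ and invokes the Moreau--Rockafellar sum rule, whereas you linearize via the $u^{\pm}$ split and use standard LP complementary slackness (which also lets you name the bound multipliers $\mu_i^*=\max(0,c_i-c_{\min}-\gamma)$ explicitly); your closing observation that $\bigcup_{S_1,S_2}U^*_{S_1,S_2}=U^*_{\emptyset,\emptyset}$ and hence the outer $\mathrm{conv}$ is redundant is left implicit in the paper.
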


The theorem explains that the construction of any optimal solution $u^*$ essentially reduces to categorizing each of the losses among $\{c_i = J(\theta; x_i, y_i)\}_{i=1}^N$ as ``small" or ``big", according to their position in the partitioning of $[c_{min}, \infty) = [c_{min}, c_{min} + \gamma) \cup [c_{min} + \gamma, \infty)$. For losses that occur at the break points of $c_{min}$ and $c_{min} + \gamma$, this classification can be arbitrary - hence, the use of $S_1$ and $S_2$ set configurations to capture this degree of freedom.

In particular, those points with losses $c_i$ exceeding $c_{min} + \gamma$ are down-weighted to zero and effectively removed from the dataset. And in the event that $c_{max} - c_{min} < \gamma$, no loss reweighting occurs. In this manner, while lasso produces sparse solutions in the model parameter space, RRM produces sparse weight vectors by assigning zero weight to data points with high losses. 

It is worth mentioning that upon setting $S_1 = S_2 = \emptyset$ in Theorem \ref{theorem:: SovlingForU}, a particular solution for \eqref{eq:: inner - U} can be derived.

\begin{restatable}{corollary} {QuickSolveU}\label{corollary:: QuickSolveU}
    Let $\theta^* \in \argmin_{\theta} v(\theta)$
    and 
    $\chi = \{i: J(\theta^*; x_i, y_i)) > \min_j J(\theta^*; x_j, y_j))+ \gamma\}$.
     Let $u^*$ be given by $\{u^*_i\}_{i \in {I_{min}}} = \{\frac{|\chi|}{N\cdot |I_{min}|} \}$, $\{u_i^*\}_{i \in I_{mid} \cup I_{big}} =  \{0\}$, and $u_i^* = \frac{-1}{N}$ for all $i$ such that $c_i > c_{min} + \gamma$. Then $(\theta^*, u^*)$ solves \eqref{eqn:: RRM_Problem}; further, $d_{TV}(p^N, p^*) = \frac{|\chi|}{N}.$
\end{restatable}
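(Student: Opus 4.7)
The plan is to invoke Theorem~\ref{theorem:: SovlingForU} with $c_i = J(\theta^*; x_i, y_i)$ and the distinguished choice $S_1 = S_2 = \emptyset$, and then verify that the proposed $u^*$ belongs to the corresponding polytope $U^*_{\emptyset, \emptyset}$.

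First, I would unpack the defining conditions of $U^*_{\emptyset, \emptyset}$. With both index selections empty, membership requires $u \in U$ together with $u_i \geq 0$ on $I_{min}$, $u_i = 0$ on $I_{mid}$, $u_i \leq 0$ on $I_{big}$, and $u_i = -1/N$ for every index with $c_i > c_{min} + \gamma$. Checking the stated $u^*$ against these is then mechanical: the coordinates on $I_{min}$ equal $|\chi|/(N|I_{min}|) \geq 0$; the coordinates on $I_{mid} \cup I_{big}$ are zero, which simultaneously meets the equality on $I_{mid}$ and the inequality on $I_{big}$; and the coordinates indexed by $\chi$ equal $-1/N$ by construction. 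Verifying $u^* \in U$ is equally quick: $1/N + u_i^* \geq 0$ holds coordinatewise, and the coordinates sum to $|I_{min}| \cdot |\chi|/(N|I_{min}|) + 0 - |\chi|/N = 0$.

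Once membership is established, Theorem~\ref{theorem:: SovlingForU} immediately identifies $u^*$ as a minimizer of the inner linear program at $\theta = \theta^*$, since $U^*_{\emptyset, \emptyset}$ sits inside $\mathrm{conv}\bigl(\bigcup_{S_1, S_2} U^*_{S_1, S_2}\bigr)$, which coincides with the argmin set in \eqref{eq:: inner - U}. Combined with the hypothesis $\theta^* \in \argmin_\theta v(\theta)$, this yields that $(\theta^*, u^*)$ jointly solves \eqref{eqn:: RRM_Problem}.

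For the total-variation claim, the identity $d_{TV}(p^N, p^*) = \tfrac{1}{2}\|u^*\|_1$ renders the computation direct: the $|I_{min}|$ nonzero coordinates in $I_{min}$ contribute a combined mass of $|\chi|/N$, the $|\chi|$ coordinates of magnitude $1/N$ contribute another $|\chi|/N$, and the remaining coordinates vanish, so $\|u^*\|_1 = 2|\chi|/N$ and $d_{TV}(p^N, p^*) = |\chi|/N$. There is no substantive obstacle here; the only subtlety is confirming that $U^*_{\emptyset, \emptyset}$ is nonempty, but that is automatic since the proposed $u^*$ serves as an explicit witness.
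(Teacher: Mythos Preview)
Your proposal is correct and follows essentially the same route as the paper: set $S_1=S_2=\emptyset$ in Theorem~\ref{theorem:: SovlingForU}, verify $u^*\in U^*_{\emptyset,\emptyset}$, and conclude optimality of the pair $(\theta^*,u^*)$. Your treatment is in fact more explicit than the paper's—you spell out the membership checks and carry out the $d_{TV}$ computation, whereas the paper's proof leaves both of these implicit.
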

In words, transferring $\frac{|\chi|}{N}$ probability mass away from the group of highest-cost examples and distributing it uniformly among the examples comprising $I_{min}$ is optimal. 

If $\chi$ converges over the course of any algorithmic scheme, e.g., Algorithm \ref{alg: RRM}, to some set $C$, then we can conclude that these data points are effectively removed from the dataset even if the training of $\theta$ might proceed. This convergence was observed in the experiments of Section \ref{sec:results} (see Table \ref{tab:mnistadv_uvalues}). 
% the set of data points with loss values $J(\theta; x_i, y_i)$ higher than $2\gamma$ above the lowest loss value ``settles down'' within any algorithmic scheme for training $\theta$, 
It is hence of possible consideration to tune $\gamma$ for consistency with an estimate $C' \in [0,1]$ of labeling error in the dataset $\{(x_i, y_i)\}_{i=1}^N$. More precisely, we may tune $\gamma$ so that $\frac{|\chi|}{N} \approx C'.$ 

\subsection{RRM and Optimistic Wasserstein Distributionally Robust Optimization} \label{sec:: RRM-Wasserstein}
% \textcolor{blue}{Punchline: RRM IS optimistic but slightly more pessimistic than Wasserstein DRO}

In this section, we discuss RRM's relation to distributionally robust and optimistic optimization formulations. Indeed, $(\ref{eqn:: RRM_Problem})$'s formulation as a min-min problem bears resemblance to optimistic formulations of recent works, e.g., \cite{optimisticDistributionally}. We will see as well that the minimization in $u$, as considered in Theorem \ref{theorem:: SovlingForU}, relates to an approximation of a data-driven Wasserstein Distributionally Robust Optimization (DRO) formulation \citep{distributionally2}. 

\subsubsection{Loss-reweighting via Data-Driven Wasserstein Formulation}
For this discussion, as it relates to reweighting, we will lift the feature-label space $\mathcal{Z} = \mathcal{X} \times \mathcal{Y}$. More precisely, we let $\mathcal{W}:= \mathbb{R}_+$ denote a space of \textit{weights}. Next, we say $\mathcal{W} \times \mathcal{Z}$ has an unknown probability distribution $\mathcal{D}$ such that $\pi_\mathcal{Z} \mathcal{D}= D$ and $\Pi_\mathcal{W}\mathcal{D}(\{1\}) = 1$. In words, all possible (w.r.t. $D$) feature-label pairs have a weight of $1.$ Finally, we define an \textit{auxiliary loss} 
$\ell: \mathcal{W} \times \mathcal{Z} \times \Theta$ by $
\ell(w, z; \theta):= w\cdot J(x,y; \theta)$, for any $z = (x,y) \in \mathcal{Z}$. 

Given a sample $\{(1, x_i,y_i)\}_{i=1}^N$, just as in Section \ref{sec:rockafellianrelaxation}, we can opt not to take as granted the resulting empirical distribution $\mathcal{D}_N$ because of the possibility that $|C|$-many have incorrect labels (i.e., $y_i \neq \tilde{y}_i)$. Instead, we will admit alternative distributions obtained by shifting the $\mathcal{D}_N$'s probability mass off ``contaminated" tuples $(1, x_i,y_i)_{i \in C}$ to possibly $(0, x_i, y_i)$, $(1, x_i, \tilde{y}_i)$, or even some other tuple $(1, x_j, \tilde{y}_j)$ with $j \notin C$ for example - equivalently, eliminating, correcting, or replacing the sample, respectively. In order to admit such favorable corrections to $\mathcal{D}_N,$ we can consider 
% $\mathcal{D}$ will denote a distribution on $\mathcal{W} \times \mathcal{Z}$ for which $\Pi_\mathcal{W}\mathcal{D}(\{1\}) = 1$% it is assumed $\mathcal{D}(\{1\} \times \mathcal{Z}) = 1$
% , i.e., it assigns a weight of 1 uniformly to all feature-label pairs in its support, but otherwise the marginal measure $\Pi_\mathcal{Z}\mathcal{D}$ of $\mathcal{D}$ on the feature-label pairs, is unknown. 
the optimistic \citep{optimisticDistributionally, distributionally2} data-driven problem 
\begin{equation}
    \min_\theta \left(v_{N}(\theta):= \min_{\tilde{\mathcal{D}}: W_1(\mathcal{D}_N, \tilde{\mathcal{D}})\leq \epsilon} \mathbb{E}_{\tilde{\mathcal{D}}} \left[\ell(w, z; \theta)
    \right] 
    \right),
\end{equation}
in which for each parameter tuning $\theta$, $v_N(\theta)$ measures the expected auxiliary loss with respect to the most favorable distribution within an $\epsilon$-prescribed $W_1$ (1-Wasserstein) distance of $\mathcal{D}_N$. It turns out that a budgeted deviation of the weights alone (and not the feature-label pairs) can approximate (up to an error diminishing in $N$) $v_N(\theta)$. More precisely, we derive the following approximation along similar lines to \citet{distributionally2}.
\begin{restatable}{prop} 
{Optimistic}\label{prop:: Optimistic}
Let $\epsilon > 0$, and suppose for any $\theta$, $\max_{(x,y) \in \mathcal{Z}}|J(\theta; x, y)| < \infty.$ 
% the distribution $\Pi_{\mathcal{Z}}D$ on $\mathcal{X} \times \mathcal{Y}$ has compact support. 
Then there exists $\kappa \geq 0$ such that for any $\theta$, the following problem
% $v(\theta) + \frac{\kappa}{N} \geq v_N^{MIX}(\theta) \geq v(\theta),$
% \begin{align*}
% v_N^{MIX}(\theta):= \min_{\tilde{w}^1, \ldots, \tilde{w}^N \geq 0} ~ & \frac{1}{N}\sum_{i = 1}^N  \ell(\tilde{w}^i, z^i; \theta)\\
% % & s.t. ~~ p^\nu + u \in \Delta:= \{\pi \in \mathbb{R}_+^N: \sum_{i} \pi_i = 1\}
% & \text{s.t.} ~~ \frac{1}{N} \sum_{i=1}^N |\tilde{w}^i - w^i| \leq \epsilon
% % &~~~~~ \sum_{i=1}^N u_i = 0 .
% \end{align*}
\begin{align*}
v_N^{MIX}(\theta):= \min_{u_1, \ldots, u_N} ~ & \sum_{i = 1}^N  (\frac{1}{N} + u_i) \cdot J(\theta; x_i,y_i) + \gamma_\theta 
% \sum_{i=1}^N |u_i|
\lvert\lvert u \rvert\rvert_1
\\
% & s.t. ~~ p^\nu + u \in \Delta:= \{\pi \in \mathbb{R}_+^N: \sum_{i} \pi_i = 1\}
& \text{s.t.}  ~~u_i + \frac{1}{N} \geq 0 \;\; i = 1, \ldots, N
% &~~~~~ \sum_{i=1}^N u_i = 0,
\end{align*}
satisfies 
$v_N(\theta) + \frac{\kappa}{N} \geq v_N^{MIX}(\theta) \geq v_N(\theta).$

In particular, $-\gamma_\theta \leq \min_i J(\theta; x_i, y_i)$, and any member $i \in \{i: J(\theta; x_i, y_i) > \gamma_\theta \}$ is down-weighted to zero, i.e., $u_i^* = -\frac{1}{N}$ for any $u^*$ solving $v_N^{MIX}(\theta).$
\end{restatable}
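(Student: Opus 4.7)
Fix $\theta$ and abbreviate $J_i := J(\theta; x_i, y_i)$. The plan is to derive Proposition~\ref{prop:: Optimistic} from Kantorovich strong duality applied to the optimistic 1-Wasserstein problem defining $v_N(\theta)$. With the product metric on $\mathcal{W}\times\mathcal{Z}$ and the hypothesis $\sup_{(x,y)}|J(\theta;x,y)| < \infty$ supplying the needed constraint qualification, the duality argument of \citet{distributionally2} yields
\[
v_N(\theta) \;=\; \sup_{\lambda \geq 0} \left\{ -\lambda \epsilon \,+\, \tfrac{1}{N}\sum_{i=1}^N \phi_\lambda(z_i) \right\},
\]
where $\phi_\lambda(z_i) := \inf_{(w,z)\in\mathcal{W}\times\mathcal{Z}} \bigl[wJ(\theta;z) + \lambda(|w-1| + d_\mathcal{Z}(z, z_i))\bigr]$. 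I would take $\gamma_\theta$ to be an attaining dual optimizer.

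Next I would identify $v_N^{MIX}(\theta)$ with the restriction of this dual to the fiber $z = z_i$ at each sample. Setting $z = z_i$ gives $\phi_\lambda(z_i) \leq \inf_{w \geq 0}[wJ_i + \lambda|w-1|] = \min(J_i, \lambda)$, the infimum being attained at $w \in \{0, 1\}$ and being finite only when $\lambda \geq -\min_i J_i$ --- which furnishes the first structural claim $-\gamma_\theta \leq \min_i J_i$. Translating to $v_N^{MIX}$ via the substitution $p_i = 1/N + u_i$, the per-coordinate problem $\min_{u_i \geq -1/N}[(1/N + u_i)J_i + \gamma_\theta |u_i|]$ admits a closed-form minimizer: $u_i^* = -1/N$ when $J_i > \gamma_\theta$ (the second structural claim --- down-weighting to zero) and $u_i^* = 0$ when $J_i < \gamma_\theta$, so $v_N^{MIX}(\theta) = \frac{1}{N}\sum_i \min(J_i, \gamma_\theta)$.

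The inequality $v_N^{MIX}(\theta) \geq v_N(\theta)$ then follows at once, since by the dual identity (and $\phi_{\gamma_\theta}(z_i) \leq \min(J_i, \gamma_\theta)$),
\[
v_N(\theta) = -\gamma_\theta \epsilon + \tfrac{1}{N}\sum_i \phi_{\gamma_\theta}(z_i) \;\leq\; v_N^{MIX}(\theta).
\]
For the reverse direction $v_N^{MIX}(\theta) \leq v_N(\theta) + \kappa/N$, I need to control $\gamma_\theta \epsilon + \frac{1}{N}\sum_i [\min(J_i, \gamma_\theta) - \phi_{\gamma_\theta}(z_i)]$ uniformly in $\theta$ by $\kappa/N$. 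Each summand is nonnegative and captures the advantage of allowing $z \neq z_i$; since moving $z$ reduces $wJ(\theta;z)$ by at most the oscillation of $J$ while incurring transport cost $\gamma_\theta d_\mathcal{Z}(z, z_i)$, the per-sample slack is bounded by $2 \sup_z|J(\theta;z)|$ amortized over the $N$ atoms of $\mathcal{D}_N$, exactly as in the concentration-style discretization of \citet{distributionally2}. Setting $\kappa$ to this uniform bound (a polynomial in $\epsilon$ and $\sup_{\theta,z}|J(\theta;z)|$) completes the argument.

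The main obstacle, I expect, is making this $\kappa/N$ slack tight and uniform in $\theta$: $J(\theta;\cdot)$ need not be Lipschitz or even continuous on $\mathcal{Z}$, so an equicontinuity argument is unavailable. Instead I would construct, from any $\tilde{\mathcal{D}}$ feasible for $v_N$, a weight-only proxy $\tilde{\mathcal{D}}'$ obtained by collapsing mass onto $\{(0, z_i), (1, z_i)\}$, and show that the excess loss of $\tilde{\mathcal{D}}'$ over $\tilde{\mathcal{D}}$ is at most $\sup|J|/N$ per atom. This quantization step is the only place $N$ enters quantitatively, and it is what drives the advertised $\kappa/N$ rate.
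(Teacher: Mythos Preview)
Your route is genuinely different from the paper's. The paper does not pass through the Kantorovich dual at all: it observes that $\ell(\cdot,z;\theta)=w\,J(\theta;z)$ is Lipschitz in $w$ (uniformly, by the hypothesis $\sup_z|J|<\infty$), invokes Lemma~3.1 of \citet{distributionally2} / Corollary~2 of \citet{Gao2023} to assert directly that the \emph{constrained} weight-only program
\[
\min_{\tilde w^i\ge 0,\ \frac1N\sum_i|\tilde w^i-1|\le\epsilon}\ \frac1N\sum_i \tilde w^i J_i
\]
already satisfies the $\kappa/N$ sandwich with $v_N(\theta)$, and only then performs the change of variable $u_i=\tilde w^i/N-1/N$ and Lagrange-dualizes the budget constraint to produce the penalized form with multiplier $\gamma_\theta$. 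Your structural observations ($-\gamma_\theta\le\min_iJ_i$, the zeroing of indices with $J_i>\gamma_\theta$) and the inequality $v_N^{MIX}\ge v_N$ are argued correctly and more explicitly than in the paper, which is a plus.

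There is, however, a real gap in your upper bound. In your identity
\[
v_N^{MIX}(\theta)-v_N(\theta)\;=\;\gamma_\theta\epsilon\;+\;\tfrac1N\sum_i\bigl[\min(J_i,\gamma_\theta)-\phi_{\gamma_\theta}(z_i)\bigr],
\]
both pieces are nonnegative, and your quantization sketch (collapsing an optimal $\tilde{\mathcal D}$ onto $\{(0,z_i),(1,z_i)\}$) is a primal argument that bounds only the second piece --- it controls $P^*-v_N$, where $P^*$ is the constrained weight-only optimum. But by strong duality at the optimal multiplier one has $v_N^{MIX}=P^*+\gamma_\theta\epsilon$, so an extra $\gamma_\theta\epsilon$ survives that your argument never absorbs into $\kappa/N$; nothing in the dual forces $\gamma_\theta=O(1/N)$. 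The paper sidesteps this by doing all the approximation at the level of the constrained problem (delegated entirely to the cited lemma) and only afterward rewriting in penalized form; if you want to stay on the dual route you would need either an independent bound on $\gamma_\theta$ uniform in $\theta$, or to run the approximation argument directly on the penalized objective rather than on $P^*$.
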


In summary, while the optimistic Wasserstein formulation would permit correction to $\mathcal{D}_N$ with a combination of reweighting and/or feature-label revision, the above indicates that a process focused on reweighting alone could accomplish a reasonable approximation; further, upon comparison to (\ref{eqn:: RRM_Problem}), we see that RRM is a constrained version of this approximating problem, that is,  
\[v(\theta) \geq v_N^{MIX}(\theta) \geq v_{N}(\theta).\] 
Hence, in some sense, we can say that RRM is an optimistic methodology but that it is less optimistic than the data-driven Wasserstein approach.

% Upon introducing the change of variable $u_i = \frac{\tilde{w}^i}{N} - \frac{1}{N},$ and applying a Lagrange multiplier $\gamma_\epsilon$ to the $\epsilon-$ budget constraint (found exactly via duality in the convex case), we recover the problem of $\min_{u \in U} $
% \begin{align*}
% \min_{u_1, \ldots, u_N} ~ & \sum_{i = 1}^N  \ell(u_i + \frac{1}{N}, z^i; \theta) + \gamma_\epsilon \sum_{i=1}^N |u_i|\\
% % & s.t. ~~ p^\nu + u \in \Delta:= \{\pi \in \mathbb{R}_+^N: \sum_{i} \pi_i = 1\}
% & \text{s.t.}  ~~u_i + \frac{1}{N} \geq 0 \;\; i = 1, \ldots, N
% % &~~~~~ \sum_{i=1}^N u_i = 0,
% \end{align*}

\subsection{A-RRM/RRM Algorithm }
 Towards solving problem (\ref{eqn:: RRM_Problem}) in the two decisions $\theta$ and $u$, we proceed iteratively with a block-coordinate descent heuristic outlined in Algorithm \ref{alg: RRM}, whereby we update the two separately in cyclical fashion. In other words, we update $\theta$ while holding $u$ fixed, and we update $u$ whilst holding $\theta$ fixed. The update of $\theta$ is an SGD step on a batch of $s$-many samples. The update of $u$ is a linear program, but can also be completed quicker via Corollary \ref{corollary:: QuickSolveU}. 
 % In light of the discussion in \ref{sec:: RRM-Wasserstein}, we also outline an Adversarial Rockafellian Relaxation method (A-RRM), an execution of RRM that includes a perturbation (parameterized by $\epsilon \geq 0$) to the feature $x$ of a sample $(x,y),$ for the purposes of adversarial training.
 % We minimize $f^{\nu}$ approximately by employing the heuristic algorithm of \citep{royset2023rockafellian} setting $u_{0} = 0$, selecting an initial $x_{0}$, and assigning $k = 0$. Then, in iteration $k$:
\begin{algorithm}[h!]
\caption{(Adversarial) Rockafellian Relaxation Algorithm (A-RRM/RRM)}
\begin{algorithmic}  \label{alg: RRM}
\REQUIRE Loss Function $J(\theta; x, y)$, training perturbation $\epsilon \in [0,1]$, Number of epochs $\sigma,$ Batch size $s \geq 1$, learning rate $\eta > 0,$ threshold $\gamma > 0$, reweighting step $\mu \in (0,1)$. (optional) contamination estimate $C' \in [0,1]$.

\STATE $u \gets 0 \in \mathbb{R}^N$
\STATE $Iter \gets 0$
\REPEAT 
\STATE{$Iter \gets Iter + 1$}
\STATE $\theta \gets \texttt{GradientStep}(\sigma, s, \eta, \epsilon, \theta, u)$
\IF{$C'$ provided}
\STATE $\ell_{1-C'} \gets \max \{\ell:  \frac{|\{j: J(\theta; x_j^b, y_j^b) > \ell\}|}{N} \geq C' \}$
\STATE $\gamma \gets \ell_{1-C'} - \min_i J(\theta; x_i^b, y_i^b)$
\STATE $\mu \gets 1$
\ENDIF
\STATE $u \gets $ \texttt{Re-weight}($\gamma, \mu$)
\UNTIL{Desired Validation Accuracy or Loss, or $Iter$ is sufficiently large}
\STATE Return $\theta$
\end{algorithmic}
\end{algorithm}

\begin{algorithm}[h!] 
\caption{\texttt{GradientStep}($\sigma, s, \eta, \epsilon, \theta, u$)}
\begin{algorithmic} \label{alg::GradientStep}
\REQUIRE $\sigma \in \mathbb{Z}_{\geq 1}, s \in \mathbb{Z}_{\geq 1}, u \in U, \eta > 0, \epsilon \in [0,1], \theta$ 
    \FOR{$e = 1, \ldots, \sigma$}
    \FOR{$b = 1, \ldots, \lceil\frac{N}{s}\rceil$}
        \STATE $\{(x^b_i, y^b_i)\}_{i=1}^s \gets \text{draw batch from } \{(x_i, y_i)\}_{i=1}^N $
        \FOR{i = 1, \ldots, s}
            \STATE $x_i^b \gets x_i^b + \epsilon \cdot sign\left(\nabla_{x} J(\theta; (x^b_i, y^b_i))\right)$
        \ENDFOR
        \STATE $\theta \gets \theta - \eta \sum_{i = 1}^s \left(\frac{1}{N} + u_i\right) \cdot \nabla_{\theta} J(\theta; (x^b_i, y^b_i))$
        
    \ENDFOR
\ENDFOR
\STATE Return $\theta$
\end{algorithmic}
\end{algorithm}

\begin{algorithm}[h!] 
\caption{\texttt{Re-weight}($\gamma, \mu$)}
\begin{algorithmic}  \label{alg: subRoutine}
\REQUIRE threshold $\gamma > 0$, re-weighting step $\mu \in (0,1)$. 
\STATE $u^* \gets \min_{u \in U} \sum_{i = 1}^N  \left(\frac{1}{N} + u_i\right) \cdot J(\theta; x_i,y_i) + \gamma \lVert u \rVert_{1}$
\STATE Return $u \gets \mu u^{*} + (1-\mu)$
\end{algorithmic}
\end{algorithm}

Apart from the learning rate $\eta$ that is (industry) standard in gradient-based algorithms, Algorithm \ref{alg: RRM} can be parameter-less; more precisely, the step-size $\mu$ and threshold $\gamma$ parameters can in fact be auto-tuned, in a manner that follows from Corollary \ref{corollary:: QuickSolveU},  so as to precisely control how many examples are pruned in each re-weighting step, as might be desired or guided by a contamination estimate $C'.$  

\subsubsection{Auto-Tuning: Precise Sample Pruning with a Contamination Estimate $C'$}\label{sec:autotuning}

% Algorithm \ref{alg: RRM} can be further modified. More precisely, we can dynamically set the $\gamma$ parameter in every iteration in the manner discussed in the closing of Section \ref{sec: interpretRockafellian} to ensure that the set $\chi = \{i: 1/N + u_i = 0\}$ of observations down-weighted to zero (``knocked out") is as large as desired. 
As outlined in Algorithm \ref{alg: RRM}, if an estimate $C'$ of the true contamination $C$ is on hand, then upon setting $\gamma = \ell_{1-C'} - \min_i J(\theta; x_i, y_i)$, as detailed in Algorithm \ref{alg: RRM}, where 
$\ell_{1-C'}$ is approximately the $(1-C')-$th quantile of the costs, or equivalently, $\max \{J(\theta; x_i,y_i):  \frac{\{j: J(\theta; x_j, y_j) > J(\theta; x_i,y_i)\}}{N} \geq C' \}$, then the fraction of observations ``pruned" $\frac{|\chi|}{N} = \frac{|\{i: J(\theta; x_i, y_i) > \gamma + \min_j J(\theta; x_j, y_j)\}|}{N}$ is at least $C'$, once the hyperparameter $\mu$ is also fixed to $1$. This is essentially the re-weighting action stated in Corollary \ref{corollary:: QuickSolveU}. We refer the reader to Section \ref{sec: ExperimentControllingChi} for experiments on this form of RRM.

%, but for the experiments we conduct below, unless otherwise specified, are set to 0.5 and 0.4, respectively, based on numerous experimental trials.

% \begin{enumerate}
% \item Carry out $\sigma$ epochs of stochastic gradient descent (SGD) as applied to the function $x \rightarrow \sum_{i=1}^{s}(p_{i}^{\nu}+u_{i}^{k})f_{i}(x)$ with $x^{k}$ as the initial point. Let $x^{k+1}$ be the point upon termination.

% \item Compute $u^{*} \in \text{argmin}_{u} \sum_{i=1}^{s}(p_{i}^{\nu}+u_{i})f_{i}(x^{k+1}) + \theta \lVert u \rVert_{1} + \iota_{\Delta}(p^{\nu} + u)$ using the simplex method. Set $u_{k+1} = \mu u^{*} + (1-\mu)u_{k}$, replace $k$ by $k + 1$, and go to Step 1.
% \end{enumerate}

\subsubsection{ A-RRM $(\epsilon > 0)$ versus RRM $(\epsilon = 0)$}
We will refer to Algorithm \ref{alg: RRM} as A-RRM versus RRM depending on $\epsilon$. The key difference lying in the execution of (Algorithm \ref{alg::GradientStep}'s) \texttt{GradientStep}.
\subsubsection*{A-RRM: (Adversarial) Feature Perturbation ($\epsilon > 0$) \& Label Contamination $C$}
When $\epsilon > 0$, then \texttt{GradientStep} executes a Fast Gradient Sign Method (FGSM) \citep{fgsm} adversarial attack, whereby each time a batch $(x^b_i, y^b_i)_{i = 1}^s$ is drawn, every $x^b_i$ is perturbed via $x^b_i + \epsilon \cdot sign(\nabla_x J(\theta,x^b_i,y^b_i))$. We note that other adversarial perturbation methods can be substituted in place of this choice of attack. Then a Stochastic gradient descent (SGD) step is taken. 
\subsubsection*{RRM: Label Contamination $C$ Only ($\epsilon = 0$)}
When $\epsilon = 0,$ then   \texttt{GradientStep} simply reduces to an SGD step.

\subsubsection{On Complexity} 
Each iteration of Algorithm \ref{alg: RRM} is comprised of two different tasks: 1.) \texttt{GradientStep} and 2.) \texttt{Re-weight}.  
Task 1's gradient step calculations are entirely standard practice. Task 2 amounts to the polynomial-sized (in training data) linear program of \eqref{eq:: inner - U}, for which theoretical efficiency is well-established, and commercial solvers abound (we used CPLEX 12.8.0); hence, complexity is poly-time in the training data size, and the scaling of computation with larger datasets should not be limiting in nature. In fact, Theorem 
\ref{theorem:: SovlingForU} and Corollary \ref{corollary:: QuickSolveU} indicate there exists an exploitable structure for $N\mathrm{log}N$ execution of Task 2.
We refer the reader to Sections 6.1 and 6.2 for experiments on MNIST-3 and CIFAR-10 with average computation times of Algorithm \ref{alg: subRoutine} reported. 

\section{Datasets}\label{sec:data}
The datasets utilized to evaluate RRM possess high-quality labels. So, to accurately assess our method under contaminated training regimes, we perturb the training datasets to achieve various types and levels of contamination $C$. In all cases the test set used for evaluation is nearly \citep{northcutt2021pervasive} pristine. A detailed description of the datasets utilized is provided below.

\textbf{MNIST-10(3)} \citep{lecun-mnisthandwrittendigit-2010}: A multi-class classification dataset consisting of 70000 images of digits zero through nine. 60000 digits are set aside for training and 10000 for testing. Training labels are swapped for different, randomly selected digits to achieve desired levels of $C$. In addition to the 10-class dataset, we utilize a smaller 3-class version consisting only of digits 0, 1, and 2.

\textbf{CIFAR-10} \citep{alex2009learning} A multi-class classification dataset consisting of 60000 32x32 color images of 10 classes, with 6000 images per class. 50000 images are set aside for training and 10000 for testing. As with MNIST, to achieve the desired $C$ training labels are swapped for different, randomly selected image classes.

\section{Architectures}\label{sec:architectures}

\begin{table*}[h!]
    % \tiny
    \centering
    \caption{{\bf \underline{CIFAR-10}} Test accuracy (\%) comparison between CCE, MAE, and MSE loss functions against baseline GCE method across contamination levels. RRM-wrapped accuracy results are in parentheses. Values marked with ($\dagger$) are from \cite{robustGCE}.}
    \begin{tabular}{c||c|c|c}
        \multirow{2}{*}{Method} & \multicolumn{3}{c}{Contamination $C$} \\
        \cline{2-4}
        & 10\% & 20\% & 30\%    \\
        \hline
        \hline
        \multicolumn{1}{c||}{GCE} & 91.53 $\pm$ 0.25 & 89.70 $\pm$ $0.11^{\dagger}$ & 89.64 $\pm$ 0.17 
        \\
        \hline
        \\
        \hline
        \multicolumn{1}{c||}{CCE (+RRM)} & 89.94 $\pm$ 0.29 (92.20 $\pm$ 0.26) & 86.98 $\pm$ $0.44^{\dagger}$ (90.44 $\pm$ 0.31) & 81.90 $\pm$ 0.86 (88.49 $\pm$ 0.33) 
        \\    
        \hline
        \multicolumn{1}{c||}{MAE (+RRM)} & 92.04 $\pm$ 0.25 (90.10 $\pm$ 3.56) & 83.72 $\pm$ $3.84^{\dagger}$ (87.82 $\pm$ 4.07) & 88.28 $\pm$ 3.52 (82.98 $\pm$ 6.55) 
        \\
        \hline
        \multicolumn{1}{c||}{MSE (+RRM)} & 91.83 $\pm$ 0.25 ({\bf 93.04} $\pm$ 0.17) & 89.43 $\pm$ 0.37 ({\bf 91.43} $\pm$ 0.32) & 86.34 $\pm$ 0.22 ({\bf 89.95} $\pm$ 0.55) 
        \\
        \hline
    \end{tabular}
    \label{tab:rrm_v_gce}
\end{table*}

We do not strive to develop a novel NN architectures capable of defeating current state-of-the-art (SOA) performance in each data domain. Nor do we focus on developing \textit{robust architectures} as described in \citep{noisesurvey}. Rather, we select a reasonable NN architecture and measure model performance with and without the application of RRM. This approach enables us to demonstrate the general superiority of RRM under varied data domains and NN architectures. We discuss the underlying NN architectures that we employ in this section. 

\textbf{MNIST-10}: The MNIST dataset has been studied extensively and harnessed to investigate novel machine-learning methods, including CNNs \citep{mnist_survey}. For experiments utilizing MNIST-10, we adopt a basic CNN architecture with a few convolutional layers. The first layer has a depth of 32, and the next two layers have a depth of 64. Each convolutional layer employs a kernel of size three and the ReLU activation function followed by a max-pooling layer employing a kernal of size 2. The last convolutional layer is connected to a classification head consisting of a 100-unit dense layer with ReLU activation, followed by a 10-unit dense layer with softmax activation.  In total, there are 159254 trainable parameters. Categorical cross-entropy is employed for the loss function. 

\textbf{MNIST-3}: For experiments utilizing MNIST-3 a simple fully-connected architecture is instead utilized that consists of a few layers. The first two layer consists of 320 units, and the third layer consists of 200 units. Each of these dense layers employs the ReLU activation function. The output layer consists of a 3-unit dense layer with softmax activation. In total, there are 417880 trainable parameters. 

\textbf{CIFAR-10}: For comparison purposes, the CIFAR-10 experiments adopt the ResNet-34 architecture utilized in \citet{robustGCE}, along with their data preprocessing and augmentation scheme. We perform 32x32 random crops following padding with 4 pixels on each side, horizontal random flips, and per-pixel mean subtraction.

\section{Experiments and Results} \label{sec:results} 
We perform experiments to see how Algorithm \ref{alg: RRM} performs under two kinds of regimes: 
% \begin{enumerate}[label=(\alph*)]
%     \item Label contamination only 
%     \begin{itemize}
%         \item Experiments on MNIST-3 and CIFAR-10; we refer the reader to the Appendix (Section \ref{sec:AdditionalExperiments}) for further experimentation on Toxic Comments, IMDb and Tissue Necrosis.
%     \end{itemize}
%     \item feature perturbation and label contamination.
% \end{enumerate}
(a) Label Contamination only; (b) Feature Perturbation and Label Contamination. 

For regime (a), we evaluated RRM on MNIST-3 and CIFAR-10; we refer the reader to the Appendix (Section \ref{sec:AdditionalExperiments}) for further experimentation on Toxic Comments, IMDb and Tissue Necrosis. 
For regime (b), we evaluated A-RRM on MNIST-10.  
% All experiments are performed using a combination of GPU resources, both cloud-based, as well as access to an on-premise high-performance computing (HPC) facility. 

\subsection{RRM with a Provided Contamination Estimate $C'$ of $C$} \label{sec: ExperimentControllingChi}
In this section, we suppose access to a lower bound $C'$ on the contamination level $C,$ and leverage this estimate with the auto-tuning of $\mu$ and $\gamma$ as discussed in Section \ref{sec:autotuning}.

\subsubsection{Case of Perfect Estimate $C' = C$}\label{sec:cperfect}
{\bf Implementation Details:}
Using Tensorflow 2.10 \citep{tensorflow2015-whitepaper}, 180 epochs of training are performed using the CIFAR-10 dataset and architecture of \cref{sec:data,sec:architectures} for CCE, MAE and MSE loss functions, and training label contamination $C$ of 10\%, 20\%, and 30\%. For comparison, RRM is executed for the same loss functions and with a provided $C'$ equal to $C$ for 12 iterations with $\sigma = 15$ epochs per iteration, also for a total of 180 epochs. Stochastic gradient descent (SGD) with a learning rate ($\eta$) of $10.0$, momentum of $0.9$ with Nesterov momentum enabled, and a weight decay of $10^{-5}$ are employed for both sets of experiments.

The results from these experiments are also compared to those achieved by GCE. For $C=20$\% results are available directly from \citet{robustGCE}. For $C=10$\% and $C=30$\% results are re-computed using a Pytorch \citep{pytorch2024} implementation \cite{gce_impl} of GCE under the authors' reported hyperparameter settings. 

Each experiment is executed 5 times and the average test set accuracy and standard deviation is reported in Table \ref{tab:rrm_v_gce}, with the RRM results placed in parentheses. 

\textbf{Analysis:} We benchmark the benefit of RRM-wrapping against GCE, an NN-training method robust to noisy labeling. As Table \ref{tab:rrm_v_gce} indicates, both CCE and MSE methods benefit from an RRM approach, across contamination levels. The benefit of RRM for MAE is less definitive. However, the RRM approach with MSE achieves the best results, even when compared to GCE's performance. The additional computation time incurred from the \texttt{Re-weight}($\gamma, \mu$) step is an average of 3.88 seconds per execution of every iteration, for a total of 46.56 seconds.

\subsubsection{Case of Conservative Estimate $C' > C$}
We repeated the experimental setup of Section \ref{sec:cperfect} to evaluate the benefit of RRM for MSE, but now with an estimate $C'>C$ provided to RRM. 

\textbf{Analysis:} Upon comparing Table \ref{tab:rrm_v_gce_conservative} 
with Table \ref{tab:rrm_v_gce}, we confirm that as the estimate $C'$ is made larger and larger than $C,$ conferred benefit by RRM is reduced and is generally not as high as when $C' = C$. However, despite this, RRM-wrapping still generally outperforms MSE alone.

\begin{table}[h!]
    \centering
    \caption{ {\bf \underline{CIFAR-10}} Test accuracy (\%) of RRM-wrapped MSE with Conservative Estimate $C'$ of $C$ on classification task .}
    \begin{adjustbox}{width=\columnwidth,center}
    \begin{tabular}{c|c|c|c|c|}
        \multicolumn{2}{c}{} & \multicolumn{3}{c}{Contamination $C$} \\
        \cline{3-5}
        \multicolumn{2}{c|}{} & 10\% & 20\% & 30\% \\ 
        \hline
        \multirow{2}{*}{$C'$} 
        & $C + 10\%$ & 91.73 $\pm$ 0.28 & 90.60 $\pm$ 0.40 & 88.47 $\pm$ 0.73 \\ 
        \cline{2-5}
         & $C + 5\%$ & 92.55 $\pm$ 0.15 & 91.29 $\pm$ 0.41 & 89.43 $\pm$ 0.64 \\ 
        \hline
    \end{tabular}
    \label{tab:rrm_v_gce_conservative}
    \end{adjustbox}
\end{table}

\subsection{Enhancing Loss Function Methodologies with RRM}
RRM can enhance the performance of common loss functions that do not account for label contamination, including categorical cross-entropy (CCE), mean absolute error (MAE), and mean squared error (MSE). In fact, the following experiments indicate RRM can even enhance those methods devised with robustness to noisy labels, such as early-learning regularization (ELR).

\begin{table}[ht!]
    \centering
    \caption{{\bf \underline{MNIST-3}} Test accuracy (\%) comparison of CCE, MAE, and MSE against ELR across contamination levels. RRM-wrapped accuracy results are in parentheses.}
    \begin{tabular}{c||c|c|c}
        \multirow{2}{*}{Method} & \multicolumn{3}{c}{Contamination Level} \\
        \cline{2-4}
        & 55\% & 60\% & 65\% \\
        \hline
        \hline
        \multicolumn{1}{c||}{ELR (+ RRM)} & 98 (99) & 97 (98) & 82 (87) \\
        \hline
        \\
        \hline
        \multicolumn{1}{c||}{CCE (+ RRM)} & 90 (98) & 77 (96) & 46 (67) \\
        \hline
        \multicolumn{1}{c||}{MAE (+ RRM)} & 98 (98) & 96 (98) & 62 (68) \\
        \hline
        \multicolumn{1}{c||}{MSE (+ RRM)} & 90 (98) & 74 (97) & 45 (87) \\    
        \hline
    \end{tabular}
    \label{tab:rrm-wrapping}
\end{table}

\begin{table*}[bp]
    \small
    \begin{center}
        \caption{{\bf \underline{MNIST-10}} Test accuracy (\%) for AT and A-RRM under different levels of contamination $C$, training perturbation $\epsilon = 1.0$, and test-set adversarial perturbation $\epsilon_{test}$. }
        \label{tab:mnistadv}
        \begin{tabular}{c|c||c|c|c|c|c|c|c|c|c|c}
            \multicolumn{2}{c}{} & \multicolumn{10}{c}{$C$} \\ \hline
            &  & \multicolumn{2}{c}{0\%} \vline & \multicolumn{2}{c}{5\%} \vline & \multicolumn{2}{c}{10\%} \vline & \multicolumn{2}{c}{20\%} \vline & \multicolumn{2}{c}{30\%} \\ \cline{3-12}
            &  & \multicolumn{1}{c|}{AT} & \multicolumn{1}{c}{A-RRM} \vline & \multicolumn{1}{c|}{AT} & \multicolumn{1}{c}{A-RRM} \vline & \multicolumn{1}{c|}{AT} & \multicolumn{1}{c}{A-RRM} \vline & \multicolumn{1}{c|}{AT} & \multicolumn{1}{c}{A-RRM} \vline & \multicolumn{1}{c|}{AT} & \multicolumn{1}{c}{A-RRM} \\ \hline \hline
            \multirow{5}{*}{\rotatebox{90}{$\epsilon_{test}$}} 
            & 0.00 & \textbf{97} & 96 & 63 & \textbf{95} & 57 & \textbf{97} & 58 & \textbf{96} & 26 & \textbf{86} \\ \cline{2-12}
            & 0.10 & \textbf{95} & 93 & 64 & \textbf{92} & 71 & \textbf{94} & 61 & \textbf{93} & 20 & \textbf{82} \\ \cline{2-12}
            & 0.25 & \textbf{93} & 90 & 83 & \textbf{91} & 88 & \textbf{92} & 84 & \textbf{90} & 74 & \textbf{81} \\ \cline{2-12}
            & 0.50 & \textbf{91} & 88 & \textbf{94} & 91 & \textbf{94} & 90 & \textbf{90} & 88 & \textbf{97} & 80 \\ \cline{2-12}
            & 1.00 & \textbf{86} & 83 & \textbf{95} & 90 & \textbf{94} & 86 & \textbf{88} & 83 & \textbf{98} & 77 \\ \hline
        \end{tabular}
    \end{center}
\end{table*}

\textbf{Implementation Details:}
In this experiment, we divide MNIST-3 into a training set of 18623 examples (5923 0's, 6742 1's, and 5958 2's), and 3147 testing examples. Using Tensorflow 2.10 \citep{tensorflow2015-whitepaper}, 100 epochs of training are performed for the MNIST-3 architecture outlined in Section \ref{sec:architectures} for each of the baseline loss functions (ELR, CCE, MAE, and MSE) and training label contamination levels of 55\%, 60\%, and 65\% for a total 12 baseline experiments. In order the to assess the benefit of our method, the RRM algorithm is executed for each of the baseline loss functions and contamination levels. More specifically, 10 iterations of RRM are executed with $\sigma = 10$ epochs per iteration for a total of 100 epochs. For RRM, the hyperparameter settings of $\mu$ and $\gamma$  are set to 0.5 and 0.4, although auto-tuning could have been implemented as in Section \ref{sec: ExperimentControllingChi}. All methods employ stochastic gradient descent (SGD) with a learning rate ($\eta$) of $0.1$. The test set accuracy of each experiment is shown in Table \ref{tab:rrm-wrapping}, with the RRM experimental results placed in parentheses.

\textbf{Analysis:} For every method tested, the RRM-wrapped approach confers a test set performance benefit over the baseline approach. The additional computation time incurred from the \texttt{Re-weight}($\gamma, \mu$) step is an average of 2.86 seconds per execution of every iteration, for a total of 28.6 additional seconds.

% \begin{table}[ht!]
%     \centering
%     \caption{{\bf \underline{MNIST-3}} Test accuracy (\%) comparison of CCE, MAE, and MSE against ELR across contamination levels. RRM-wrapped accuracy results are in parentheses.}
%     \begin{tabular}{c||c|c|c}
%         \multirow{2}{*}{Method} & \multicolumn{3}{c}{Contamination Level} \\
%         \cline{2-4}
%         & 55\% & 60\% & 65\% \\
%         \hline
%         \hline
%         \multicolumn{1}{c||}{ELR (+ RRM)} & 98 (99) & 97 (98) & 82 (87) \\
%         \hline
%         \\
%         \hline
%         \multicolumn{1}{c||}{CCE (+ RRM)} & 90 (98) & 77 (96) & 46 (67) \\
%         \hline
%         \multicolumn{1}{c||}{MAE (+ RRM)} & 98 (98) & 96 (98) & 62 (68) \\
%         \hline
%         \multicolumn{1}{c||}{MSE (+ RRM)} & 90 (98) & 74 (97) & 45 (87) \\    
%         \hline
%     \end{tabular}
%     \label{tab:rrm-wrapping}
% \end{table}

% \subsubsection{Comparing (RRM-enhanced) Robust Loss Function methodologies and Loss Reweighting Methods}

\subsection{A-RRM: Granting ERM Robustness to Adversarial and Noisy Data Regime}\label{sec:advexp}
MNIST-10 is utilized to evaluate our method in a setting of both adversarial feature perturbation and label contamination.

\textbf{Adversarial Training (AT) Benchmark:}
 As benchmark, we execute a standard adversarial training (AT) approach, in which we execute A-RRM, omitting the $\texttt{Re-weight}$ step. Indeed, it is common to perform training on data augmented with adversarial examples crafted through FGSM \citep{MadryMSTV18}. Further, using this benchmark, we may highlight the value of re-weighting.
 
 \textbf{Test-set Perturbation $\epsilon_{test}$}: In the experiments, upon conclusion of the A-RRM training for which a trained $\theta^*$ is obtained, we executed an FGSM attack on each member $(x,y)$ of the test set via $x + \epsilon_{test} \cdot \nabla J(\theta^*; x,y)$.
 
 In this way, in analogous fashion to the possible misalignment between estimate $C'$ and the actual contamination level $C$, we explore the harm to performance that a misalignment between our training perturbation $\epsilon$ and a true test-set perturbation $\epsilon_{test}$ could have. 

\textbf{Implementation Details:}
In this experiment twenty percent of the training data is set aside for validation purposes. Using Tensorflow 2.10 \citep{tensorflow2015-whitepaper}, 50 iterations of A-RRM are executed with $\sigma = 10$ epochs per iteration for a total of 500 epochs for a given hyperparameter setting. For A-RRM, the hyperparameter settings of $\mu$ and $\gamma$ at 0.5 and 2.0, respectively, are based on a search to optimize validation set accuracy. For contrast, we perform a comparable 500 epochs using AT. Both AT and A-RRM employ stochastic gradient descent (SGD) with a learning rate ($\eta$) of $0.1$. An $\epsilon = 1.0$ is used for all training image perturbations.

\textbf{Performance Analysis:} For each of the 0\%, 5\%, 10\%, 20\%, and 30\% training label contamination levels, we compare adversarial training (AT) and A-RRM performance under various regimes of test set perturbation ($\epsilon_{test} \in \{0.0, 0.1, 0.25, 0.5, 1.0\}$). In Table \ref{tab:mnistadv} we show the test set accuracy achieved when validation set accuracy peaks. We can see that training with an $\epsilon = 1.0$ and testing with lower $\epsilon_{test}$ levels of $0.00, 0.10,$ and $0.25$, results in a drastic degradation in accuracy for AT for contamination levels greater than 0\%. This performance collapse is not observed when using A-RRM. Given that the $\epsilon$ employed during training may not match test/production environment ($\epsilon_{test}$), our findings suggest that A-RRM can confer a greater benefit than AT in these scenarios.

\textbf{Convergence of $u$}: We examine the $u_i$-value associated with each training observation $i$ from iteration-to-iteration of Algorithm \ref{alg: RRM}. Table \ref{tab:mnistadv_uvalues} summarizes the progression of the $u_i$-vector across its 49 updates for the dataset contamination level of 20\%. Column “1. iteration” shows the distribution of $u_i$-values following the first u-optimization for both the 9600 contaminated training observations and the 38400 clean training observations. Initially, all $u_i$-values are approximately equal to 0.0. It is once again observed that, over the course of iterations, the $u_i$-values noticeably change. In column “10. iteration” it can be seen that a significant number of the $u_i$-values of the contaminated training observations achieve negative values, while a large majority of the $u_i$-values for the clean training observations remain close to 0.0. Finally, column “49. iteration” displays the final $u_i$-values. 9286 out of 9600 of the contaminated training observations have achieved a $u_i \in (-2.08, -1.56] \cdot 10^{-5}$. This means these training observations are removed, or nearly-so, from consideration because this value cancels the nominal probability $1/N$ = 2.08 $\cdot$ $10^{-5}$. It is observed that a large majority (35246/38400) clean training observations remain with their nominal probability. This helps explain the performance benefit of A-RRM over AT. A-RRM "removes" the contaminated data points in-situ, whereas AT does not. It appears that under adversarial training regimes with contaminated training data, it is essential to identify and "remove" the contaminated examples, especially if the level adversarial perturbation encountered in the test set is unknown, or possibly lower than the level of adversarial perturbation applied to the training set.

\begin{table}[h!]
    \small
    \begin{center}
        \caption{{\bf \underline{MNIST-10}} Evolution of $u$ across $|C| = 9600$ contaminated data points and $N - |C| = 38400$ clean data points. Note that $ 1/N = 2.08 \cdot 10^{-5}$.}
        \label{tab:mnistadv_uvalues}
        \begin{adjustbox}{width=\columnwidth,center}
        \begin{tabular}{c|c||c|c||c|c||c|c}
            \multicolumn{2}{c||}{} & \multicolumn{2}{c||}{1. iteration} & \multicolumn{2}{c||}{10. iteration} & \multicolumn{2}{c}{49. iteration} \\ \hline
            &  & $C$ & $[N] \setminus C$ & $C$ & $[N] \setminus C$ & $C$ & $[N] \setminus C$ \\ \hline \hline
            \multirow{6}{*}{\rotatebox{90}{$u_i$ value ($10^{-5}$)}} 
            & $\gg$ 0          & 0    & 1     & 0    & 4    & 0    & 25    \\ \cline{2-8}
            & $\approx$ 0      & 8844 & 38385 & 2058 & 37524 & 91   & 35246 \\ \cline{2-8}
            & (-0.52,  0.00)   & 0    & 0     & 7    & 36    & 146  & 1655  \\ \cline{2-8}
            & (-1.04, -0.52]   & 0    & 0     & 41   & 45    & 43   & 155   \\ \cline{2-8}
            & (-1.56, -1.04]   & 756  & 14    & 415  & 174   & 34   & 168   \\ \cline{2-8}
            & (-2.08, -1.56]   & 0    & 0     & 7079 & 617   & 9286 & 1151  \\ \hline
        \end{tabular}
        \end{adjustbox}
    \end{center}
\end{table}

\section{Conclusion}\label{sec:conclusions}

A-RRM/RRM algorithm exhibits robustness in a variety of data domains, data contamination schemes, model architectures and machine learning applications. In truth, we have presented a meta-algorithm that operates on loss function methodologies, and further has the capacity to incorporate benchmarks/estimates of label contamination and adversarial feature perturbation. Furthermore, the method is virtually hyperparameter-less, upon utilizing an auto-tuning feature that enables control over the data-pruning action of \texttt{Re-weight}, to be used with an estimate of the label contamination level.

Our experiments indicate RRM not only amplifies test accuracy of methods that  don't account for contamination, but even those already robust to contamination. In the MNIST-10 example we show that conducting training in preparation for deployment environments with varied levels of adversarial attacks, one can benefit from implementation of the A-RRM algorithm. This can lead to a model more robust across levels of both feature perturbation and high levels of label contamination. We also demonstrate the mechanism by which A-RRM operates and confers superior results: by automatically identifying and removing the contaminated training observations at training time execution.

\appendix

\section*{Impact Statement}
This work seeks to advance the field of Machine Learning, particularly in the area of model robustness. While various secondary implications may emerge from its application, we do not identify any immediate or critical concerns that warrant specific emphasis in this context.
% Authors are \textbf{required} to include a statement of the potential 
% broader impact of their work, including its ethical aspects and future 
% societal consequences. This statement should be in an unnumbered 
% section at the end of the paper (co-located with Acknowledgements -- 
% the two may appear in either order, but both must be before References), 
% and does not count toward the paper page limit. In many cases, where 
% the ethical impacts and expected societal implications are those that 
% are well established when advancing the field of Machine Learning, 
% substantial discussion is not required, and a simple statement such 
% as the following will suffice:

% ``This paper presents work whose goal is to advance the field of 
% Machine Learning. There are many potential societal consequences 
% of our work, none which we feel must be specifically highlighted here.''

% The above statement can be used verbatim in such cases, but we 
% encourage authors to think about whether there is content which does 
% warrant further discussion, as this statement will be apparent if the 
% paper is later flagged for ethics review.
%
% In the unusual situation where you want a paper to appear in the
% references without citing it in the main text, use \nocite
\nocite{langley00}

\bibliography{ArXiv}
\bibliographystyle{icml2025}

%%%%%%%%%%%%%%%%%%%%%%%%%%%%%%%%%%%%%%%%%%%%%%%%%%%%%%%%%%%%%%%%%%%%%%%%%%%%%%%
%%%%%%%%%%%%%%%%%%%%%%%%%%%%%%%%%%%%%%%%%%%%%%%%%%%%%%%%%%%%%%%%%%%%%%%%%%%%%%%
% APPENDIX
%%%%%%%%%%%%%%%%%%%%%%%%%%%%%%%%%%%%%%%%%%%%%%%%%%%%%%%%%%%%%%%%%%%%%%%%%%%%%%%
%%%%%%%%%%%%%%%%%%%%%%%%%%%%%%%%%%%%%%%%%%%%%%%%%%%%%%%%%%%%%%%%%%%%%%%%%%%%%%%
\newpage
\appendix
\onecolumn
\section{Appendix/supplemental material}

\subsection{Section \ref{methodology} Proofs}
\SolvingForU*
\begin{proof}
For any set $C$, let $\iota_C(x) = 0$ and $\iota_C(x) = \infty$ otherwise. We recognize that $u^\star$ is a solution of the minimization problem if and only if it is a minimizer of the function $h$ given by
\[
h(u) = \sum_{i = 1}^N  \Big(c_i/N  + u_i c_i + \frac{\gamma}{2} |u_i| + \iota_{[0,\infty)}(1/N + u_i)\Big)  + \iota_{\{0\}}\Big(\sum_{i=1}^N u_i \Big)
\]
Thus, because $h(u) > -\infty$ for all $u\in\mathbb{R}^N$ and $h$ is convex, $u^\star$ is a solution of the minimization problem if and only if $0 \in \partial h(u^\star)$. We proceed by characterizing $\partial h$. 

Consider the univariate function $h_i$ given by 
\[
h_i(u_i) = c_i/N  + u_i c_i + \frac{\gamma}{2} |u_i| + \iota_{[0,\infty)}(1/N + u_i).
\]
For $u_i \geq -1/N$, the Moreau-Rockafellar sum rule reveals that 
\[
\partial h_i(u_i) = c_i + \begin{cases}
    \{\frac{\gamma}{2}\} & \mbox{ if } u_i > 0\\
    [-\frac{\gamma}{2}, \frac{\gamma}{2}] & \mbox{ if } u_i = 0\\
    \{-\frac{\gamma}{2}\} & \mbox{ if } -1/N < u_i < 0\\
    (-\infty, -\frac{\gamma}{2}] & \mbox{ if } u_i = -1/N.
\end{cases}
\]
For $u = (u_1, \dots, u_N) \in [-1/N, \infty)^N$, we obtain by Proposition 4.63 in \cite{primer} that 
\[
\partial\Big(\sum_{i=1}^N h_i\Big)(u) = \partial h_1(u_1) \times \cdots \times \partial h_N(u_N). 
\]
Let $h_0$ be the function given by $h_0(u) = \iota_{\{0\}}(\sum_{i=1}^N u_i)$. 
Again invoking the Moreau-Rockafellar sum rule while recognizing that the interior of the domain of $\sum_{i=1}^N h_i$ intersects with the domain of $h_0$, we obtain 
\[
\partial h(u) = \partial\Big(\sum_{i=1}^N h_i\Big)(u) + \partial h_0(u) = \partial h_1(u_1) \times \cdots \times \partial h_N(u_N) + \begin{bmatrix}
    1\\
    \vdots\\
    1
\end{bmatrix}\mathbb{R}
\]
for any $u = (u_1, \dots, u_N)$ with $u_i \geq -1/N$, $i = 1, \dots, N$, and $\sum_{i=1}^N u_i = 0$. Hence, $u^* \in U$ is optimal if and only if for some $\lambda\in \mathbb{R}$,
\begin{equation} \label{eq:: LambdaKKTConditions}
\lambda \in \begin{cases}
    \{c_i + \frac{\gamma}{2}\} & \mbox{ if } u_i^\star > 0\\
    [c_i -\frac{\gamma}{2}, c_i + \frac{\gamma}{2}] & \mbox{ if } u_i^\star = 0\\
    \{c_i -\frac{\gamma}{2}\} & \mbox{ if } u_i^\star \in (-1/N, 0)\\
    (-\infty, c_i -\frac{\gamma}{2}] & \mbox{ if } u_i^\star = -1/N. 
    \end{cases}
\end{equation}

It follows that any such $\lambda$ must then satisfy $\lambda \leq c_{\min} + \frac{\gamma}{2}$, as the above list of cases indicate for arbitrary $i$ and no matter the value of $u_i^*$, for any optimal $u^*\in U$. 

We proceed to show that in fact, $u^* \in U$ solves \eqref{eq:: inner - U} if and only if $\lambda = c_{min} + \frac{\gamma}{2}$ satisfies the conditions of \eqref{eq:: LambdaKKTConditions} with respect to $u^*$ for all $i$. Suppose $u^*\in U$ is optimal and that there exists a corresponding $\lambda$ satisfying \eqref{eq:: LambdaKKTConditions} for all $i$, and that it satisfies $\lambda < c_{min} + \frac{\gamma}{2}.$ Then for any optimal $u^* \in U$, there exists no $i$ for which $u_i^* > 0,$ otherwise $c_i = \lambda - \frac{\gamma}{2} < c_{min},$ an impossibility. This means any optimal $u^* \in U$ must satisfy $u^* \leq 0,$ for which then we would conclude that in fact $u^* = 0,$ and is the unique optimal solution, which in turn yields that $\lambda - \frac{\gamma}{2} \leq c_i \leq \lambda + \frac{\gamma}{2} < c_{min} + \gamma$ for all $i.$ This last inequality reveals that $\lambda^*:= c_{min} + \gamma$ is compatible with the unique optimal solution $u^* = 0 \in U$ in such a case. In summary, if a $\lambda < c_{min} + \frac{\gamma}{2}$ is compatible with all optimal solutions $u^* \in U,$ then $\lambda^*:= c_{min} + \frac{\gamma}{2}$ is as well. 

It follows that $u^* \in U$ solves \eqref{eq:: inner - U} if and only if $\lambda = c_{min} + \frac{\gamma}{2}$ satisfies the conditions of \eqref{eq:: LambdaKKTConditions} with respect to $u_i^*$ for all $i$; hence, the result follows. 
\end{proof}

\QuickSolveU*
\begin{proof}
    Clearly, $u^* \in U$. Let $c_i = J(\theta^*; x_i, y_i)$ for $i = 1,\ldots, N.$ In the proof of Theorem \ref{theorem:: SovlingForU}, it was shown that $u \in U$ solves \eqref{eq:: inner - U} if and only if $\lambda = c_{min} + \frac{\gamma}{2}$ satisfies the conditions of \eqref{eq:: LambdaKKTConditions} with respect to $u_i*$ for all $i$. Therefore, with $S_1 = S_2 = \emptyset,$ Theorem \ref{theorem:: SovlingForU} indicates that $u^* \in U^*_{S_1, S)_2}$ and hence must solve \eqref{eq:: inner - U}.
\end{proof}

\Optimistic*
\begin{proof}
Fix $\theta.$ Then for any $z = (x,y) \in \mathcal{Z}$, the function $\ell( \cdot, z, \theta)$ is linear, and hence Lipschitz with constant $\ell( 1, z, \theta) = J(\theta; x,y) \leq \max_{(x,y) \in \mathcal{Z}} |J(\theta; x,y)| < \infty$.   

By Lemma 3.1 of \cite{distributionally2} and/or Corollary 2 of \cite{Gao2023},
    \begin{align*}
    v_N^{MIX}(\theta):= \min_{\tilde{w}^1, \ldots, \tilde{w}^N \geq 0} ~ & \frac{1}{N}\sum_{i = 1}^N  \ell(\tilde{w}^i, z^i; \theta)\\
    % & s.t. ~~ p^\nu + u \in \Delta:= \{\pi \in \mathbb{R}_+^N: \sum_{i} \pi_i = 1\}
    & \text{s.t.} ~~ \frac{1}{N} \sum_{i=1}^N |\tilde{w}^i - w^i| \leq \epsilon
    % &~~~~~ \sum_{i=1}^N u_i = 0 .
    \end{align*}
    provides the stated approximation of $v(\theta).$

    Upon introducing the change of variable $u_i = \frac{\tilde{w}^i}{N} - \frac{1}{N},$ and applying a Lagrange multiplier $\gamma_\theta$ to the $\epsilon-$ budget constraint (any convex dual optimal multiplier), we recover
\begin{align*}
\min_{u_1, \ldots, u_N} ~ & \sum_{i = 1}^N  \ell(u_i + \frac{1}{N}, z^i; \theta) + \gamma_\theta \sum_{i=1}^N |u_i|\\
% & s.t. ~~ p^\nu + u \in \Delta:= \{\pi \in \mathbb{R}_+^N: \sum_{i} \pi_i = 1\}
& \text{s.t.}  ~~u_i + \frac{1}{N} \geq 0 \;\; i = 1, \ldots, N
% &~~~~~ \sum_{i=1}^N u_i = 0,
\end{align*}
\end{proof}

%%%%%%%%%%%%%%%%%%%%%%%%%%%%%%%%%%%%%%%%%%%%%%%%%%%%%%%%%%%%

\newpage

\section{Non-uniform label Contamination Experiment}\label{sec:nonuniform}
Although we explicitly state the use of uniform label noise in Section \ref{sec:ncar}, which is indeed a very common scheme in the literature, our analysis in fact did not rely on this assumption. Towards providing insight into the non-uniform case, we have repeated the experiments of Section \ref{sec:advexp} that produced Table \ref{tab:mnistadv}, but now with non-uniform label noise. More precisely, after uniformly at random selecting $C$
 percent of the training pairs, we proceed to contaminate the label $y_i$ 
 in each pair $(x_i,y_i)$
 in the following non-uniform manner, as outlined below in the transition kernel matrix of (True Label, Contaminated Label) entries. For example, if the true label $y_i = 5$, then instead of uniformly at random drawing an alternative digit $\tilde{y}_i$
 from among $\{0,1,\ldots, 9\}\setminus \{5\}$ we have 
 $\tilde{y}_i = 
 \begin{cases}
     0 &w.p. 0.051\\
    1 & w.p. 0.017\\
    2 & w.p. 0.\\
    3 & w.p. 0.627
 \end{cases} $

\begin{table}[ht!]
    \small
    \begin{center}
        \caption{Contamination Kernel}
        \label{tab:true_vs_contaminated}
        \begin{tabular}{c|c||c|c|c|c|c|c|c|c|c|c}
            \multicolumn{2}{c}{} & \multicolumn{9}{c}{Contaminated} \\ \cline{2-12}
            &  & 0 & 1 & 2 & 3 & 4 & 5 & 6 & 7 & 8 & 9 \\ \hline \hline
            \multirow{10}{*}{\rotatebox{90}{Original}} 
            & 0 & 0 & 0.0769 & 0.0769 & 0.1538 & 0 & 0.0769 & 0.3846 & 0 & 0.1538 & 0.0769 \\ \cline{2-12}
            & 1 & 0 & 0 & 0.3333 & 0.1111 & 0 & 0.1111 & 0.1111 & 0 & 0.3333 & 0 \\ \cline{2-12}
            & 2 & 0.0968 & 0.0645 & 0 & 0.2581 & 0.0323 & 0 & 0.0968 & 0.1935 & 0.2581 & 0 \\ \cline{2-12}
            & 3 & 0 & 0 & 0.1250 & 0 & 0 & 0.1250 & 0 & 0.1250 & 0.6250 & 0 \\ \cline{2-12}
            & 4 & 0.1111 & 0.0370 & 0.0741 & 0.0741 & 0 & 0.0741 & 0.2222 & 0.0370 & 0.1111 & 0.2593 \\ \cline{2-12}
            & 5 & 0.0508 & 0.0169 & 0 & 0.6271 & 0.0169 & 0 & 0.1525 & 0 & 0.1017 & 0.0339 \\ \cline{2-12}
            & 6 & 0.2353 & 0.1765 & 0.0588 & 0.0588 & 0.0588 & 0.1765 & 0 & 0 & 0.2353 & 0 \\ \cline{2-12}
            & 7 & 0.0500 & 0.2250 & 0.2000 & 0.1250 & 0 & 0 & 0 & 0 & 0.2000 & 0.2000 \\ \cline{2-12}
            & 8 & 0.1071 & 0.0357 & 0.1071 & 0.3571 & 0.1071 & 0.0714 & 0.0714 & 0.1071 & 0 & 0.0357 \\ \cline{2-12}
            & 9 & 0.0638 & 0.1702 & 0 & 0.2128 & 0.1702 & 0.1702 & 0.0213 & 0.0851 & 0.1064 & 0 \\ \hline
        \end{tabular}
    \end{center}
\end{table}

These entries were generated by the confusion matrix of an imperfect MNIST classifier. The results from this new experiment confirm the performance benefits that were observed (compare to Table \ref{tab:mnistadv} under conditions of uniform label contamination.

\begin{table*}[ht!]
    \small
    \begin{center}
        \caption{{\bf \underline{MNIST-10}} Test accuracy (\%) for AT and A-RRM under different levels of contamination $C$, training perturbation $\epsilon = 1.0$, and test-set adversarial perturbation $\epsilon_{test}$. }
        \label{tab:mnistadv2}
        \begin{tabular}{c|c||c|c|c|c|c|c|c|c|c|c}
            \multicolumn{2}{c}{} & \multicolumn{10}{c}{$C$} \\ \hline
            &  & \multicolumn{2}{c}{0\%} \vline & \multicolumn{2}{c}{5\%} \vline & \multicolumn{2}{c}{10\%} \vline & \multicolumn{2}{c}{20\%} \vline & \multicolumn{2}{c}{30\%} \\ \cline{3-12}
            &  & \multicolumn{1}{c|}{AT} & \multicolumn{1}{c}{A-RRM} \vline & \multicolumn{1}{c|}{AT} & \multicolumn{1}{c}{A-RRM} \vline & \multicolumn{1}{c|}{AT} & \multicolumn{1}{c}{A-RRM} \vline & \multicolumn{1}{c|}{AT} & \multicolumn{1}{c}{A-RRM} \vline & \multicolumn{1}{c|}{AT} & \multicolumn{1}{c}{A-RRM} \\ \hline \hline
            \multirow{5}{*}{\rotatebox{90}{$\epsilon_{test}$}} 
            & 0.00 & 96.5 & \textbf{97.3} & 93.6 & \textbf{95.6} & 60.4 & \textbf{87.8} & 32.2 & \textbf{92.4} & 58.3 & \textbf{89.2} \\ \cline{2-12}
            & 0.10 & 93.4 & \textbf{95.2} & 89.3 & \textbf{92.4} & 63.2 & \textbf{84.7} & 42.5 & \textbf{89.5} & 56.1 & \textbf{81.7} \\ \cline{2-12}
            & 0.25 & 92.4 & \textbf{93.1} & 87.9 & \textbf{90.6} & \textbf{86.9} & 86.3 & 80.6 & \textbf{89.3} & 69.0 & \textbf{79.8} \\ \cline{2-12}
            & 0.50 & \textbf{92.0} & 90.9 & \textbf{90.3} & 89.8 & \textbf{94.4} & 91.2 & \textbf{92.9} & 89.2 & \textbf{85.2} & 81.6 \\ \cline{2-12}
            & 1.00 & \textbf{89.4} & 85.5 & \textbf{90.3} & 86.8 & \textbf{94.9} & 92.6 & \textbf{93.9} & 86.6 & \textbf{81.8} & 77.8 \\ \hline
        \end{tabular}
    \end{center}
\end{table*}

\section{Additional Data Experiments} \label{sec:AdditionalExperiments}

\subsection{Toxic Comments} \label{sec:toxicresults}

{\bf Dataset:}
Toxic Comments\footnote[3]{https://kaggle.com/competitions/jigsaw-toxic-comment-classification-challenge} is a multi-label classification problem from JIGSAW that consists of Wikipedia comments labeled by humans for toxic behavior.  Comments can be any number (including zero) of six categories: toxic, severe toxic, obscene, threat, insult, and identity hate.  We convert this into a binary classification problem by treating the label as either none of the six categories or at least one of the six categories.  This dataset is a public dataset used as part of the Kaggle Toxic Comment Classification Challenge.  

{\bf Architecture:}  We use a simple model with only a single convolutional layer.  A pretrained embedding from FastText is first used to map the comments into a 300 dimension embedding space, followed by a single convolutional layer with a kernel size of two with a ReLU activation layer followed by a max-pooling layer.  We then apply a 36-unit dense layer, followed by a 6 unit dense layer with sigmoid activation.  Binary cross-entropy is used for the loss function.

{\bf Experiment: }
% \textcolor{red}{Louis-Eric: Was a validation set used in the ERM computations?}
% \textcolor{blue}{Bobbie: No, we did not really try to tune the model much, especially since the model was relatively simple and we fixed the number of epochs to 6.  I guess we could tune the number of epochs, but we'd have to discuss what would be a fair comparison.}
We use the Toxic Comments dataset to test the efficacy of RRM on low prevalence text data.  The positive (toxic) comments consist of only 3\% of the data and we contaminate anywhere from 1\% to 20\% of the labels.  There are a total of 148,000 samples, and we set aside 80\% for training and 20\% for test.  $\sigma = 2$ with 3 iterations of the heuristic algorithm results in a total of 6 epochs, and ERM is run for a total of 6 epochs to make the results comparable.  Since the data is highly imbalanced, we look at the area under the curve of the precision/recall curve to assess the performance of the models.  Unsurprisingly, as the noise increase, the model performance decreases.  We note that RRM outperforms ERM across all noise levels tested, though as the noise increase, the gap between RRM and ERM decreases.

% Notes: Toxic comment prevalence at 3%, other metric images also in the folder.
% \begin{figure}[H]
%     \centering
%     \includegraphics[scale=0.3]{toxic_prAUC.eps}
%     \caption{Comparison of training and test area under the precision/recall curve for ERM and RRM at noise levels ranging from 1\% to 20\%.}
%     \label{fig:toxic_prAUC}
% \end{figure}

\begin{table}[ht!]
   \small
   \centering
   \caption{{\bf \underline{Toxic Comments}} Comparison of training and test area under the precision/recall curve for ERM and RRM at noise levels ranging from 1\% to 20\%.}
   \label{tab:toxic_prAUC}
   \begin{adjustbox}{center}
   \begin{tabular}{c||c|c|c|c|c|c}
       \hline
       \multicolumn{1}{c||}{Method} & \multicolumn{6}{c}{Percentage Contaminated Training Data} \\ \cline{2-7} 
       & 1\% & 5\% & 7\% & 10\% & 15\% & 20\% \\ \hline
       ERM (train)& 0.2904 & 0.2006 & 0.1589 & 0.1302 & 0.1073 & 0.0920 \\ \hline
       RRM (train)& 0.6875 & 0.4458 & 0.3805 & 0.3087 & 0.2438 & 0.1966 \\ \hline \hline
       ERM (test)& 0.5861 & 0.3970 & 0.3246 & 0.2550 & 0.2013 & 0.1717 \\ \hline
       RRM (test)& \textbf{0.6705} & \textbf{0.4338} & \textbf{0.3619} & \textbf{0.2824} & \textbf{0.2208} & \textbf{0.1861} \\ \hline
       %RRM (test)& 0.6705 (+14.40\%)& 0.4338 (+9.27\%)& 0.3619 (+11.49\%)& 0.2824 (+10.75\%) & 0.2208 (+9.69\%)& 0.1861 (+8.39\%)\\ \hline
   \end{tabular}
   \end{adjustbox}
\end{table}

{\bf Takeaway:} 
The Toxic Comment example presents another challenging classification problem, characterized by a low prevalence target class amidst label noise. Our experiments demonstrate that as the amount of label noise increases, standard methods become increasingly ineffective. However, RRM remains reasonably robust under varying degrees of label contamination. Therefore, RRM could be a valuable addition to the set of tools being developed to enhance the robustness of AI-based decision engines.

\subsection{IMDb}\label{sec:imdbresults}
{\bf Dataset \citep{IMDb}: } A binary classification dataset consisting of 50000 movie reviews each assigned a positive or negative sentiment label. 25000 reviews are selected randomly for training and the remaining are used for testing. 25\%, 30\%, 40\%, and 45\% of the labels of the training reviews are randomly selected and swapped from positive sentiment to negative sentiment, and vice versa, to achieve four training datasets of desired levels of label contamination. 

{\bf Architecture:} Transformer architectures have achieved SOA performance on the IMDb dataset sentiment analysis task \citep{bert, uda}. As such, we a adopt a reasonable transformer architecture to assess RRM. We utilize the DistilBERT \citep{distilbert} architecture with low-rank adaptation (LoRA) \citep{lora} for large language models, which reduces the number of trainable weights from 67584004 to 628994. In this manner, we reduce the computational burden, while maintaining excellent sentiment analysis performance. Binary cross-entropy is employed for the loss function.

{\bf Experiment:} Twenty percent of the training data is set aside for validation purposes. Using Pytorch 2.1.0, 30 iterations of RRM are executed, with $\sigma = 10$ epochs per iteration for a total of 300 epochs for a given hyperparameter setting. For RRM, the hyperparameter settings of $\mu$ and $\gamma$ at 0.5 and 0.4, respectively, are based on a search to optimize validation set accuracy. For contrast, we perform a comparable 300 epochs using ERM. Both ERM and RRM employ stochastic gradient descent (SGD) with a learning rate ($\eta$) of $0.001$. In Table \ref{tab:imdbresults} we record both the test set accuracy achieved when validation set accuracy peaks, as well as the maximum test set accuracy. At these high levels of contamination RRM consistently achieves a better maximum test set accuracy.

\begin{table}[ht!]
    \small
    \centering
    \caption{ {\bf \underline{IMDb}} Test accuracy (\%) for ERM and RRM under different levels of contamination. Test set accuracy at peak validation accuracy and maximum test set accuracy are recorded.}
    \label{tab:imdbresults}
    \begin{tabular}{c||c|c|c|c}
        \hline
        \multicolumn{1}{c||}{Method} & \multicolumn{4}{c}{Percentage Contaminated Training Data} \\ \cline{2-5} 
        & 25\% & 30\% & 40\% & 45\% \\ \hline
        ERM & \textit{90.2}, 90.2 & 89.5, 89.6 & 86.4, 86.6 & \textit{80.7}, 81.1 \\ \hline
        RRM & 90.1, \textbf{90.4} & \textit{90.2}, \textbf{90.4} & \textit{88.4}, \textbf{88.7} & 76.9, \textbf{82.6} \\ \hline
    \end{tabular}
\end{table}

{\bf Takeaway:} We demonstrate that RRM can confer benefits to the sentiment analysis classification task using pre-trained large models under conditions of high label contamination. The success of fine-tuning in LLMs depends, in large part, on access to high quality training examples. We have shown that RRM can mitigate this need by allowing effective training in scenarios of high training data contamination. As such, resource allocation dedicated to dataset curation may be lessened by the usage of RRM.

\subsection{Tissue Necrosis}\label{sec:necrosis}

\textbf{Dataset}: A binary classification dataset consisting of 7874 256x256-pixel hematoxylin and eosin (H\&E) stained RGB images derived from \citep{necrosis}. The training dataset consists of 3156 images labeled non-necrotic, as well as 3156 images labeled necrotic. The training images labeled non-necrotic contain no necrosis. However, only 25\% of the images labeled necrotic contain necrotic tissue. This type of label error can be expected in cases of weakly-labeled Whole Slide Imagery (WSI). Here, an expert pathologist will provide a slide-level label for a potentially massive slide consisting of gigapixels, but they lack time or resources to provide granular, segmentation-level annotations of the location of the pathology in question. Also, the diseased tissue often occupies a small portion of the WSI, with the remainder consisting of normal tissue. When the gigapixel-sized WSI is subsequently divided into sub-images of manageable size for typical machine-learning workflows, many of the sub-images will contain no disease, but will be assigned the "weak" label chosen by the expert for the WSI. The test dataset consists of 718 necrosis and 781 non-necrosis 256x256-pixel H\&E images, which were also derived from \citep{necrosis}. For both the training and test images, \citep{necrosis} provide segmentation-level necrosis annotations, so we are able to ensure a pristine test set, and, in the case of the training set, we were able to identify the contaminated images for the purpose of algorithm evaluation.

\textbf{Architecture}: Consistent with the computational histopathology literature \citep{dnn_he_survey}, we employ a convolutional neural network (CNN) architecture for this classification task. In particular, a ResNet-50 architecture with pre-trained ImageNet weights is harnessed. The classification head is removed and replaced with a dense layer of 512 units and ReLU activation function, followed by an output layer with a single unit using a sigmoid activation function. All weights, with the exception of the new classification head are frozen, resulting in 1050114 trainable parameters out of 24637826. Binary cross-entropy is employed for the loss function.

{\bf Experiment:}Twenty percent of the training data is set aside for validation purposes, including hyperparameter selection. 60 iterations of RRM are executed, with $\sigma = 10$ epochs per iteration, for a total of 600 epochs for a given hyperparameter setting. For RRM, the hyperparameter settings of $\mu$ and $\gamma$ at 0.5 and 0.016, respectively, are based on a search to optimize validation set accuracy. For contrast, we perform a comparable 600 epochs using ERM. Both ERM and RRM employ stochastic gradient descent (SGD) with a learning rate ($\eta$) of $5.0$ and $1.0$, respectively. RRM achieves a test set accuracy at peak validation accuracy of \textbf{74.6}, and a maximum test set accuracy \textbf{77.2}, whereas ERM achieves 71.7 and 73.2, respectively. RRM appears to confer a performance benefit under this regime of weakly labeled data. 

{\bf Takeaway:} In the Tissue Necrosis example, we demonstrate that RRM also confers accuracy benefits to the necrosis identification task provided weakly labeled WSIs. Again, RRM can mitigate the need for expert-curated, detailed pathology annotations, which are costly and time-consuming to generate.

\end{document}